\documentclass[11pt]{article}
\usepackage[utf8]{inputenc}
\usepackage{amsmath, amssymb}
\usepackage{amsthm}
\usepackage{algorithm}
\usepackage{fancyhdr}
\usepackage{booktabs} 
\usepackage{algpseudocode} 
\newtheorem{proposition}{Proposition}
\usepackage{breqn}
\allowdisplaybreaks
\newtheorem{definition}{Definition}
\newtheorem{theorem}{Theorem}
\usepackage[final]{acl}
\usepackage{svg}
\usepackage{times}
\usepackage{latexsym}
\usepackage[T1]{fontenc}

\usepackage[utf8]{inputenc}

\usepackage{microtype}

\usepackage{inconsolata}

\usepackage{graphicx}
\usepackage{subcaption}
\usepackage{placeins}

\title{Calibration-Aware Policy Optimization for Reasoning LLMs}

\author{
  \textbf{Ziqi Wang\textsuperscript{1,2}},
  \textbf{Xingzhou Lou\textsuperscript{1,2}},
  \textbf{Meiqi Wu\textsuperscript{1}},
  \textbf{Zhengqi Wen\textsuperscript{3}},
\\
  \textbf{Junge Zhang\textsuperscript{1,2}}
\\
  \textsuperscript{1}National Key Laboratory of Cognition and Decision Intelligence for Complex Systems,
  \\
  Institution of Automation, Chinese Academy of Sciences
\\
  \textsuperscript{2}School of Artificial Intelligence, University of Chinese Academy of Sciences
\\
  \textsuperscript{3}Beijing National Research Center for Information Science and Technology, Tsinghua University
\\
  \small{
    \textbf{Correspondence to Junge Zhang:} \href{mailto:email@domain}{jgzhang@nlpr.ia.ac.cn}
  }
}
\begin{document}
\maketitle
\thispagestyle{fancy}
\begin{abstract}
Group Relative Policy Optimization (GRPO) enhances LLM reasoning but often induces overconfidence, where incorrect responses yield lower perplexity than correct ones, degrading relative calibration as described by the Area Under the Curve (AUC). Existing approaches either yield limited improvements in calibration or sacrifice gains in reasoning accuracy. 
We first prove that this degradation in GRPO-style algorithms stems from their uncertainty-agnostic advantage estimation, which inevitably misaligns optimization gradients with calibration. This leads to improved accuracy at the expense of degraded calibration. We then propose \textbf{C}alibration-\textbf{A}ware \textbf{P}olicy \textbf{O}ptimization \textbf{(CAPO)}. It adopts a logistic AUC surrogate loss that is theoretically consistent and admits regret bound, enabling uncertainty-aware advantage estimation. By further incorporating a noise masking mechanism, CAPO achieves stable learning dynamics that jointly optimize calibration and accuracy.
Experiments on multiple mathematical reasoning benchmarks show that CAPO-1.5B significantly improves calibration by up to 15\% while achieving accuracy comparable to or better than GRPO, and further boosts accuracy on downstream inference-time scaling tasks by up to 5\%. Moreover, when allowed to abstain under low-confidence conditions, CAPO achieves a Pareto-optimal precision–coverage trade-off, highlighting its practical value for hallucination mitigation.
\end{abstract}

\section{Introduction}
Model calibration is defined as the correlation between a model’s confidence in its answers and the ground truth correctness of those answers \citep{geng2024survey}. From the era of conventional neural networks to the advent of Large Language Models (LLMs), calibration has consistently been a key focus due to its paramount importance in two aspects \citep{xiao2025restoring, tao2024trust, kadavath2022language}. 

First, calibration is fundamental to model trustworthiness, enabling reliable uncertainty estimation and abstention to mitigate hallucination—the generation of plausible but factually incorrect assertions. This is particularly vital in high-stakes domains such as finance and healthcare \citep{savage2025large}.
Second, confidence estimates are widely used to guide algorithmic decisions.
Multi-agent or cascading systems strengthen model collaboration when a single model is uncertain \citep{luo2025llmpoweredmultiagentautomatedcrypto,warren2025bidirectionalmodelcascadingproxy,chuang2025learningroutellmsconfidence}. Self-paced training prioritizes uncertain samples \citep{feng2025your,wang2025entropybasedadaptiveweightingselftraining}. And inference-time scaling strategies select candidate responses with high confidence \citep{stoisser2025towards,vashurin2025uncertainty,zhou2025bridging}.
All these approaches critically depend on well-calibrated confidence estimates to reflect the true correctness of model outputs. 

However, several studies have indicated that Reinforcement Learning from Verifiable Rewards (RLVR) (\citealp{shao2024deepseekmath};\citealp{yu2025dapo};\citealp{zheng2025groupsequencepolicyoptimization}) can make models overconfident (\citealp{liu2025c};\citealp{kalai2025language};\citealp{bereket2025uncalibrated}). This often manifests as degradation in relative calibration, where the model outputs fluent but incorrect answers with higher perplexity than correct ones. 

Existing efforts to address the overconfidence induced by algorithms such as GRPO \citep{shao2024deepseekmath} and GSPO \citep{zheng2025groupsequencepolicyoptimization} primarily rely on heuristic designs. These works often lack quantitative analyses of calibration and, critically, provide no theoretical guarantees for calibration improvement. As a result, they either yield only limited calibration gains, as in CoDaPO \citep{zhou2025codapo} and CDE \citep{dai2025cde}, or compromise the model’s overall accuracy, as in SimKO \citep{peng2025simko}.

To address these issues, we first experimentally show that during the GRPO optimization process, the typical relative calibration metric AUC \citep{ling2003auc,vashurin2025uncertainty,zhou2025bridging} progressively worsens as accuracy improves. GRPO-style methods construct advantages from group-wise reward differences. We prove that reward-only sample evaluation induces optimization gradients corresponding to an inconsistent surrogate for calibration optimization \citep{gao2012consistency}. Consequently, the learning process is not aligned with calibration improvement, which explains the degradation in experiments.

Building upon this discovery, we propose a novel approach \textbf{C}alibration-\textbf{A}ware \textbf{P}olicy \textbf{O}ptimization \textbf{(CAPO)}, which (1) adopts uncertainty-aware advantage estimation based on consistent logistic AUC surrogate, enabling joint optimization of calibration and accuracy, and (2) introduces a reference-model-based noise masking mechanism to maintain training stability. In addition to this theoretical justification, a complementary gradient analysis further illustrates how the learning dynamics naturally prioritizes the correction of responses with misaligned confidence, offering additional intuition.

Experimental results on Qwen2.5-Math-1.5B and 7B models demonstrate that CAPO enables stable and joint optimization of both calibration and accuracy. It successfully prevents calibration degradation and attains accuracy that matches or surpasses that of GRPO, as shown in Figure~\ref{parento-auc-grpo-gspo}. Our contributions are summarized as follows:

\begin{itemize}
    \item We provide \textbf{experiment evidence and theoretical explanation} of calibration degradation in GRPO-style algorithms.
    \item To address this issue, we introduce a \textbf{theoretically grounded and consistent optimization objective} that enables joint optimization of accuracy and calibration.
    \item \textbf{Extensive experiments across six benchmarks} demonstrate significant calibration improvements over GRPO, GSPO, and three additional baselines with preserved accuracy, yielding Pareto-optimal precision–coverage trade-offs for hallucination mitigation and improved inference-time scaling accuracy.
\end{itemize}

\begin{figure*}[t]
  \centering
  \begin{subfigure}[t]{0.32\textwidth}
    \centering
    \includegraphics[width=\linewidth]{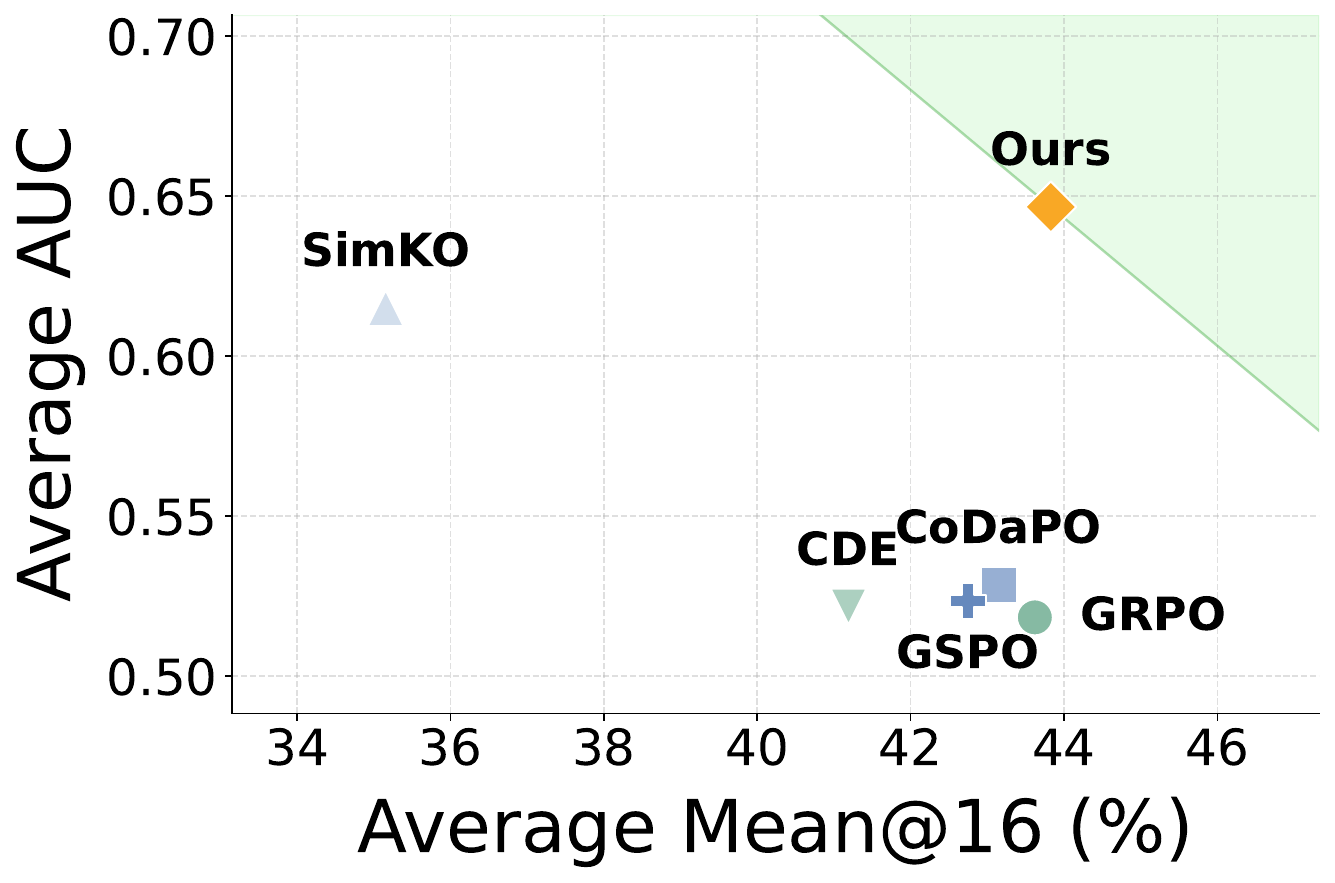}
    \caption{}
  \end{subfigure}\hfill
  \begin{subfigure}[t]{0.32\textwidth}
    \centering
    \includegraphics[width=\linewidth]{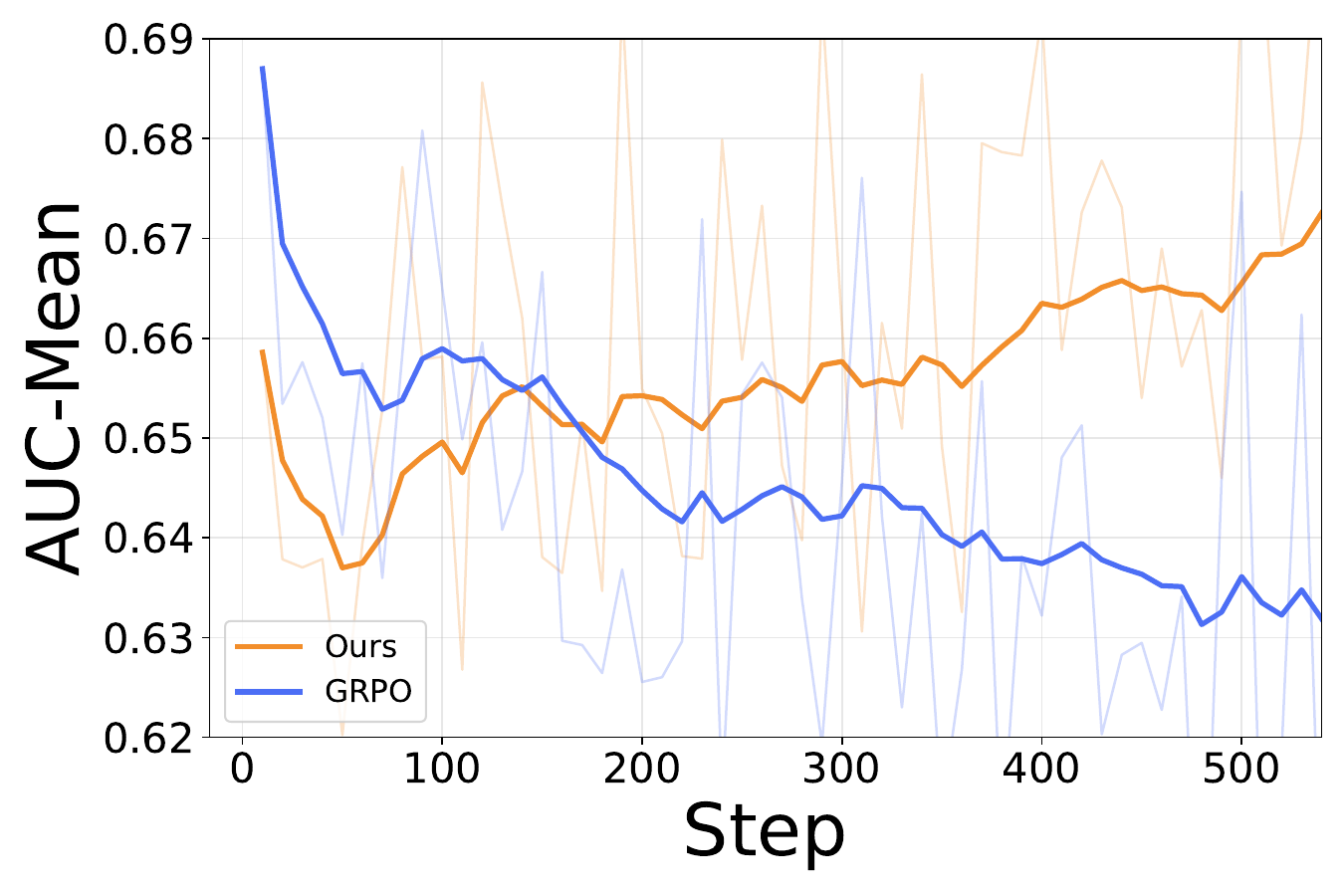}
    \caption{}
  \end{subfigure}\hfill
  \begin{subfigure}[t]{0.32\textwidth}
    \centering
    \includegraphics[width=\linewidth]{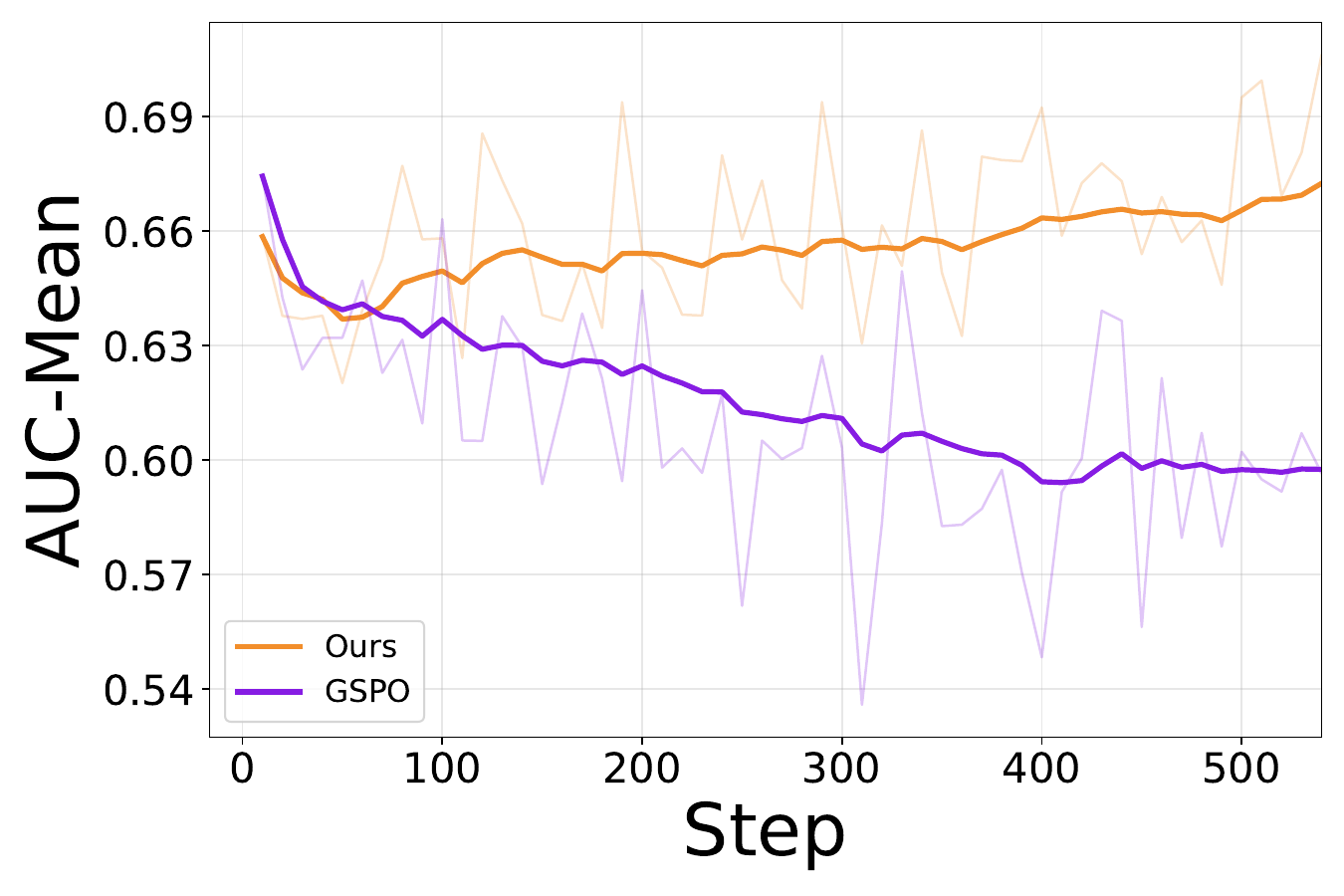}
    \caption{}
  \end{subfigure}
  \caption{(a) Comparison of average calibration (measured by AUC-mean) and accuracy (measured by mean@16) across six test benchmarks for our method and all baselines on Qwen2.5-Math-7B; (b)(c) Comparison of calibration dynamics on the validation set during training between our method and GRPO (left) / GSPO (right).}
  \label{parento-auc-grpo-gspo}
\end{figure*}

\section{Preliminary and Related Work}
As our approach builds upon AUC optimization techniques to analyze and mitigate relative calibration degradation in GRPO-style algorithms, this section reviews relevant background on GRPO and calibration, along with related prior work.
\subsection{Group Relative Policy Optimization}

The reasoning ability of LLMs can be optimized using outcome reward-based Reinforcement Learning, by modeling the generation process as an Markov Decision Process. GRPO achieves this efficiently by computing the advantage value using the reward differences among samples within a group, thereby obviating the need for a value model. For a specific question-answer pair $(q, a)$, the behavior policy $\pi_{\theta_{\text{old}}}$ samples a group of $G$ individual responses $\{ o_i \}_{i=1}^{G}$. Then, the advantage $\hat{A}_{i,t}$ of the $i$-th response is calculated by normalizing the group-level rewards$\{ R_i \}_{i=1}^{G}$:
\begin{equation}
    \hat{A}_{i,t} = 
    \frac{ R_i - \mathrm{mean}(\{ R_i \}_{i=1}^{G}) }
         { \mathrm{std}(\{ R_i \}_{i=1}^{G}) },
\end{equation}
 where $R = 1$ indicates a correct response and $R = 0$ indicates an incorrect one.

GRPO adopts a ppo-style clipped objective, together with a directly imposed KL penalty term:

\begin{equation}
\label{grpo-obj}
\resizebox{\columnwidth}{!}{
$
\begin{aligned}
&J_{\text{GRPO}}(\theta)=
\mathbb{E}_{(q,a)\sim \mathcal{D},\, \{o_i\}_{i=1}^{G} \sim \pi_{\theta_{\mathrm{old}}}(\cdot \mid q)}
\Bigg[
\frac{1}{G}\sum_{i=1}^G \frac{1}{|o_i|}\sum_{t=1}^{|o_i|}
\\
&\qquad \qquad
\min\Big(
r_{i,t}(\theta)\hat A_{i,t},
\mathrm{clip}\!\big(r_{i,t}(\theta),1-\epsilon,1+\epsilon\big)\hat A_{i,t}
\Big)
\\
&\qquad  \qquad 
-\beta D_{\mathrm{KL}}\!\left(\pi_\theta \,\|\, \pi_{\mathrm{ref}}\right)
\Bigg],
\end{aligned}
$}
\end{equation}

where
\begin{equation}
    r_{i,t}(\theta)
    = 
    \frac{
        \pi_\theta(o_{i,t} \mid q, o_{i,<t})
    }{
        \pi_{\theta_{\text{old}}}(o_{i,t} \mid q, o_{i,<t})
    }.
\end{equation}
The clipping operator $\mathrm{clip}(\cdot,\,1-\epsilon,\,1+\epsilon)$ is introduced
to constrain each policy update, preventing the new policy from drifting too far
away from the previous one by restricting the policy ratio within the interval
$[1-\epsilon,\,1+\epsilon]$.
\begin{equation}
\resizebox{\columnwidth}{!}{
$
\mathbb{D}_{\mathrm{KL}}\!\left( \pi_\theta \,\|\, \pi_{\mathrm{ref}} \right)
=
\frac{\pi_{\mathrm{ref}}(o_{i,t} \mid q, o_{i,<t})}
     {\pi_\theta(o_{i,t} \mid q, o_{i,<t})}
-\log\!\left(
\frac{\pi_{\mathrm{ref}}(o_{i,t} \mid q, o_{i,<t})}
     {\pi_\theta(o_{i,t} \mid q, o_{i,<t})}
\right)
-1 .$}
\label{eq:kl_estimation}
\end{equation}
In addition, the KL divergence between the current policy and the reference policy (i.e., the base model) is approximated as Equation (\ref{eq:kl_estimation}).

While base models are shown to be well-calibrated \citep{kalai2025language}, algorithms in the GRPO-like family are observed to cause model calibration collapse, where models generate confident, yet incorrect, responses (\citealp{liu2025c};\citealp{dai2025cde}). Prior RL-based efforts to mitigate this focus on heuristics: reward/advantage shaping (\citealp{dai2025cde};\citealp{zhou2025codapo}), regularization (\citealp{liu2025c}), or label smoothing (\citealp{peng2025simko}).

However, these works rely primarily on qualitative descriptions of calibration degradation. As a result, their heuristic approaches lack theoretical guarantees and generally fail to balance accuracy with calibration preservation.

Our work addresses these deficiencies from a principled quantitative perspective and offers a theoretical justification for calibration optimization, and a proper balance between performance and trustworthiness.

\subsection{Calibration Metric}
Perplexity (PPL) is a widely used uncertainty indicator in free-form generation \citep{stoisser2025towards, vashurin2025uncertainty, zhou2025bridging}:
\begin{equation}
\label{ppl}
\text{PPL}(o)=\exp\!\left(-\frac{1}{|o|}\sum_{t=1}^{|o|}\log \pi(o_t \mid o_{<t})\right).
\end{equation}
PPL reflects the model’s intrinsic uncertainty without incurring additional computational overhead. In contrast, approaches that elicit explicit confidence via prompting \citep{zeng2025thinking,wen2024mitigating,damani2025beyond} are often sensitive to prompt design \citep{yang2024verbalizedconfidencescoresllms} and self-consistency methods \citep{xiong2024llmsexpressuncertaintyempirical,manakul2023selfcheckgpt,tanneru2024quantifying} increase inference costs, making them less general. We therefore consider them outside the scope of this work.

Model calibration can generally be quantified using two families of metrics (\citealp{geng2024survey}): 
\textbf{relative calibration} (e.g., AUC) and \textbf{absolute calibration} (e.g., Expected Calibration Error).  

Relative calibration emphasizes the model’s ability to \emph{rank} samples—assigning higher confidence to correct responses than to incorrect ones—formalized as ($f$ denotes the confidence scoring function):
\begin{equation}
\begin{aligned}
f(q, {o}_i) \le f(q, {o}_j)
\Longleftrightarrow 
R_i
\le
R_j,       
\end{aligned}    
\end{equation}
A widely used metric for relative calibration, the Area Under the Curve (AUC) \citep{yang2022auc,yuan2021large,zhu2022benchmarking}, directly quantifies the probability that a model assigns higher confidence to a correct response than to an incorrect one, thereby capturing the core notion of relative calibration:
\begin{equation}
\label{auc definition}
\resizebox{\columnwidth}{!}{
$
\mathrm{AUC}(\pi,q,f) =
\mathbb{E}_{o_i,\,o_j \sim \mathcal{D}}
\big[\,\mathbb{I}\big((R_i - R_j)
(f(o_i) - f(o_j)) > 0\big)\big],
$}
\end{equation}
where $o_i$ and $o_j$ denote a randomly sampled pair of responses generated by the model for the same question $q$.  
We further define AUC-mean as the average AUC score across all questions in a dataset, serving as an overall measure of the model’s calibration performance:

\begin{equation}
\text{AUC-mean}(\pi,\mathcal{Q},f)
=
\frac{1}{|\mathcal{Q}|}
\sum_{q \in \mathcal{Q}}
\mathrm{AUC}(\pi,q,f),
\end{equation}
where $\mathcal{Q}$ denotes the set of questions.

Absolute calibration indicates that a model’s returned confidence matches its true correctness likelihood:
\begin{dmath}
P(R=1 \mid f(q, o) = k) = k.
\end{dmath}

In this work, we focus on relative calibration rather than absolute calibration for two reasons: (1) PPL is not directly comparable to correctness probability, making absolute calibration ill-defined, and (2) absolute calibration is an aggregate statistical property that does not ensure instance-level discrimination: for example, assigning identical confidence (e.g., 0.5) to both correct and incorrect responses perfectly satisfy absolute calibration criteria yet fails to distinguish them, limiting its usefulness for downstream tasks such as inference-time scaling. In contrast, relative calibration explicitly enforces pairwise separability.

\subsection{Consistency of AUC Optimization}
Maximizing the AUC is equivalent to minimizing the risk: $L(f)=-\mathrm{AUC}(\pi,q,f)$. However, this objective is non-differentiable and non-convex, thus a direct optimization typically results in NP-hard problems (\citealp{gao2012consistency}).

To address this difficulty, prior work commonly replaces the indicator in Equation (\ref{auc definition}) with a pairwise differentiable surrogate loss:
\begin{equation}
\label{eq:surrogate}
\Psi(f,o_i,o_j)
=
\phi\big[(R_i - R_j)\,(f(o_i) - f(o_j))\big],
\end{equation}
where $\phi$ is a convex function such as the exponential loss 
$\phi(t)=e^{-t}$, hinge loss $\phi(t)=\max(0,1-t)$, 
or squared loss $\phi(t)=(1-t)^2$(\citealp{gao2013one};\citealp{zhao2011online};\citealp{kotlowski2011bipartite};\citealp{calders2007efficient};\citealp{charoenphakdee2019symmetric}).

AUC consistency is defined as follows:
\begin{definition}
\label{def:auc-consistency}
The surrogate loss $\phi$ is said to be \emph{consistent with AUC} if, for every sequence $\{f^{\langle n\rangle}(x)\}_{n\ge 1}$,
\[
L_{\phi}\!\left(f^{\langle n\rangle}\right) \to L_{\phi}^{*}
\quad\Rightarrow\quad
L\!\left(f^{\langle n\rangle}\right) \to L^{*},
\]
where $L_\phi$ is the surrogate risk and $L$ is the true AUC risk. $L_{\phi}^{*}$ and $L^{*}$ is the corresponding optimal value.
\end{definition}

Intuitively, consistency emphasizes the \emph{asymptotic correctness} of a surrogate loss: convergence to the optimal surrogate risk guarantees convergence to the optimal AUC. 
For this reason, consistency is widely regarded as one of the most important theoretical properties of surrogate losses.

A sufficient condition for achieving AUC consistency is provided by the following theorem (\citealp{gao2012consistency}):
\begin{theorem}
\label{thm:consistency}
The surrogate loss 
$\Psi(f,o_i,o_j)$ 
is consistent with AUC if $\phi:\mathbb{R}\rightarrow\mathbb{R}$ is a convex, differentiable, and non-increasing function with $\phi'(0) < 0$.
\end{theorem}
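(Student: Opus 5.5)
The approach is the standard Gao--Zhou argument: reduce consistency to a pointwise, pair-by-pair condition on the minimizers of the conditional surrogate risk. I will work with a finite instance space, which covers the responses to a fixed $q$. Write $\eta(o)=P(R=1\mid o)$. For an unordered pair $\{o,o'\}$ with $o\neq o'$, the expected surrogate contribution depends only on the score difference $s=f(o)-f(o')$:
\[
C_{o,o'}(s)=\eta(o)\bigl(1-\eta(o')\bigr)\,\phi(s)+\eta(o')\bigl(1-\eta(o)\bigr)\,\phi(-s).
\]
Only pairs with $R_i\neq R_j$ carry loss. When $R_i=R_j$ the argument is $0$ and $\phi(0)$ is a constant. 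The AUC risk is likewise a sum over pairs of the mis-ranking mass. Its optimum $L^*$ is attained by any $f$ whose order agrees with the order of $\eta$, up to ties.

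\textbf{Step 1: a strict ordering property.} Set $a=\eta(o)(1-\eta(o'))$ and $b=\eta(o')(1-\eta(o))$, so that $a>b\iff\eta(o)>\eta(o')$. Suppose $a>b$. I claim that every (near-)minimizer of $C$ must have $s>0$. Convexity of $C$ gives $C'(0)=(a-b)\phi'(0)<0$, since $\phi'(0)<0$. Hence for every $s\le 0$,
\[
C(s)\ge C(0)\ge C(\delta)+c
\]
for some small $\delta>0$ and some $c>0$. This margin is the crux. The sign of the derivative at $0$ is the only place $\phi'(0)<0$ enters, and convexity upgrades that local information to a global bound on the half-line $s\le0$. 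The non-increasing assumption guarantees that the infimum is approached along $s>0$, possibly with $s\to\infty$.

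\textbf{Step 2: globalize and pass to sequences.} The surrogate risk is not separable across pairs, because the score differences $s_{o,o'}$ are coupled through $f$. I would nonetheless argue by contradiction. Suppose $L_\phi(f^{\langle n\rangle})\to L_\phi^*$ but $L(f^{\langle n\rangle})\not\to L^*$. Then along a subsequence some fixed pair with $\eta(o)>\eta(o')$ has $f^{\langle n\rangle}(o)\le f^{\langle n\rangle}(o')$; finiteness allows this choice of a single pair. From this configuration I would construct a modified score $\tilde f$ that strictly lowers the surrogate risk by a uniform amount $c'>0$, contradicting convergence to $L_\phi^*$.

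The main obstacle is this step, which Gao--Zhou handle carefully. Changing the single score $f(o)$ perturbs every pair that contains $o$. I would first try to replace $f$ by a monotone rearrangement, or to shift all instances with $\eta\ge\eta(o)$ by $+\delta$. I would then use the Step 1 margin together with convexity: for pairs already correctly ordered, the shift does not increase the loss to first order, because $\phi$ is non-increasing. Keeping $\delta$ fixed yields a uniform gain $c'$ independent of $n$.

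\textbf{Step 3: ties and conclusion.} Pairs with $\eta(o)=\eta(o')$ contribute identically to $L$ under any ordering, so they are harmless. The remaining correctly ordered pairs then give $L(f^{\langle n\rangle})\to L^*$, which is the consistency required by Definition~\ref{def:auc-consistency}.
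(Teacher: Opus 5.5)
The paper gives no proof of this theorem; it cites Gao--Zhou. Measured against what a proof has to do, your Step~2 is where all the work lies, and the mechanism you sketch there fails.

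\textbf{Step~2 fails as sketched.} Raising every instance with $\eta\ge\eta(o)$ by $\delta$ can make correctly ordered pairs worse, despite your appeal to monotonicity. Each such pair contributes $C_{x,x'}(s)=a\phi(s)+b\phi(-s)$, and the second term \emph{increases} with $s$. So once a pair's gap exceeds the minimizer of its own $C$, or sits at a kink of it, widening the gap costs something. Your claim holds only when $b=0$, i.e.\ without label noise, and nothing forces the single gain on the mis-ordered pair to dominate these costs.

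More fundamentally, Steps~1--3 never use differentiability of $\phi$ away from $0$, and no argument of that shape can succeed. The hinge loss $\phi(t)=\max(0,1-t)$ is convex, non-increasing, and differentiable at $0$ with $\phi'(0)=-1$. It satisfies your Step~1 verbatim, yet it is not AUC-consistent, which is one of Gao--Zhou's results. Concretely, take three instances with $\eta=(0.8,0.5,0.4)$ and masses $(0.8,0.1,0.1)$. The pairwise hinge risk is minimized by $f=(1,0,0)$, which ties $x_2$ and $x_3$ although $\eta_2>\eta_3$, so the constant sequence $f^{\langle n\rangle}=f$ violates consistency. Your shift of $\{x_1,x_2\}$ by $+\delta$ actually raises the risk. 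The pair $(x_1,x_3)$ sits at the kink $s=1$, and the extra cost $p_1p_3\eta_3(1-\eta_1)\delta$ of its $\phi(-s)$ term exceeds the gain $p_2p_3(\eta_2-\eta_3)\delta$ on the tied pair.

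\textbf{The missing idea: stationarity equations.} Global differentiability enters through the first-order conditions at a minimizer. Write $f_k=f(x_k)$ and $w_{ij}=p_ip_j\eta_i(1-\eta_j)$. The surrogate risk is then $\sum_{i\ne j}w_{ij}\phi(f_i-f_j)$, up to an additive constant and a positive factor. At a minimizer $f^*$, the conditions $\partial L_\phi/\partial f_k=0$ hold as \emph{equations}, and this needs $\phi'$ at every gap $f^*_k-f^*_j$, not just at $0$. Divide the $k$-th equation by $p_k$; the vanishing self-term $j=k$ may be included. If $f^*_1=f^*_2$ with $\eta_1>\eta_2$, the two equations involve identical $\phi'$-values, and subtracting them gives
\[
(\eta_1-\eta_2)\sum_{j}p_j\bigl[(1-\eta_j)\phi'(f^*_1-f^*_j)+\eta_j\phi'(f^*_j-f^*_1)\bigr]=0 .
\]
This is impossible, because every summand is $\le 0$ and the $j=1$ summand equals $p_1\phi'(0)<0$. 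A strict reversal $f^*_1<f^*_2$ is excluded by the same termwise comparison, using that $\phi'$ is non-decreasing. For hinge these conditions are only subdifferential inclusions at kinks, the subtraction breaks down, and you get exactly the tie above.

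\textbf{Further pieces missing from your outline.}
\begin{itemize}
\item Minimizers need not exist, e.g.\ exponential loss with $\eta\in\{0,1\}$. You must pass to a subsequence along which all gaps converge in $[-\infty,+\infty]$. Bounded risk then keeps $x_1,x_2$ in one block of finite gaps, and you run the stationarity argument on that block.
\item You need $L_\phi^*>-\infty$; otherwise a uniform gain $c'$ contradicts nothing. Indeed $\phi(t)=-t$ meets every hypothesis as stated, yet the paper's own Theorem~3 shows it is inconsistent. A lower bound such as $\phi\ge 0$ must be added to the hypotheses.
\item Your Step~3 relies on the half-credit convention for ties, as Gao--Zhou use. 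Under the strict indicator in the paper's AUC definition, two instances with the same $\eta\in(0,1)$ are penalized when tied. With just those two instances, the logistic risk is minimized only by tying them, so the statement fails under that convention.
\end{itemize}
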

It can be readily shown that surrogate losses such as logistic loss and exponential loss are consistent with AUC according to the above theorem.

Furthermore, under the realizable setting where each response is assigned a correct or incorrect label with probability one, 
\citet{gao2012consistency} derives the following regret bound, which theoretically guarantees that 
minimizing the specified surrogate risk is aligned with optimizing AUC:
\begin{theorem}
\label{regret bound}
For exponential loss, hinge loss, general hinge loss, $q$-norm hinge loss, and least square loss, we have
\begin{equation}
L(f) - L^* \le L_\phi(f) - L_\phi^*,
\end{equation}
and for logistic loss, we have
\begin{equation}
L(f) - L^* \le \frac{1}{\ln 2}\bigl(L_\phi(f) - L_\phi^*\bigr).
\end{equation}
\end{theorem}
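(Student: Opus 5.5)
The plan is to work in the realizable setting and reduce both risks to a pairwise, per-instance comparison. For a fixed question $q$, let $P$ be the distribution of correct responses ($R=1$) and $N$ that of incorrect ones ($R=0$), with $p = \Pr(R=1)$. Only mixed pairs contribute to either risk, so the AUC risk is, up to the constant factor $2p(1-p)$ and an additive constant,
\[
L(f) = \mathbb{E}_{o^+\sim P,\,o^-\sim N}\bigl[\mathbb{I}(f(o^+)-f(o^-)\le 0)\bigr],
\]
and correspondingly
\[
L_\phi(f) = \mathbb{E}_{o^+,o^-}\bigl[\phi\bigl(f(o^+)-f(o^-)\bigr)\bigr].
\]
Each response has a deterministic label. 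Hence a scorer that separates positives from negatives by an arbitrarily large margin drives $L$ to $0$ and $L_\phi$ to $\inf_t \phi(t)$. That infimum is $0$ for exponential, hinge (and its general and $q$-norm variants) and logistic loss, attained or approached as $t\to\infty$. So $L^*=0$ and $L_\phi^*=\inf\phi$. For least squares the optimum instead sits at margin $1$ with value $0$. I will check each loss separately to confirm that $L_\phi^*$ equals the pointwise minimum $\phi_{\min}$ of $\phi$ over the relevant range.

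With the optima identified, the bound reduces to a pointwise inequality between the indicator and the surrogate. For every margin $t$ I want
\[
\mathbb{I}(t\le 0) \le c\,\bigl(\phi(t)-\phi_{\min}\bigr),
\]
with $c=1$ or $c=1/\ln 2$. Taking expectations over pairs then gives $L(f)-L^*\le c\,(L_\phi(f)-L_\phi^*)$ directly, because both sides are expectations over the same pair distribution.

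The pointwise inequality is trivial for $t>0$, since the left side is $0$ and $\phi\ge\phi_{\min}$. For $t\le 0$, the monotonicity or convexity in Theorem~\ref{thm:consistency} gives $\phi(t)\ge\phi(0)$. It is then enough to verify $\phi(0)-\phi_{\min}\ge 1/c$ for each loss:
\begin{itemize}
\item exponential: $e^0-0=1$;
\item hinge: $1-0=1$, and likewise for the general and $q$-norm hinge, normalized so that $\phi(0)=1$;
\item least squares: $(1-0)^2-0=1$;
\item logistic with $\phi(t)=\ln(1+e^{-t})$: $\phi(0)=\ln 2$, giving the constant $1/\ln 2$.
\end{itemize}
If the logistic loss is instead taken as $\log_2(1+e^{-t})$, the constant becomes $1$. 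So I will fix the natural-log convention to match the stated bound.

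The main obstacle is least squares. It is not monotone, and $\phi(t)=(1-t)^2$ does not have infimum $0$ as $t\to\infty$. So I must argue carefully that in the realizable case the optimal surrogate risk is still $0$, achieved by a scorer placing all positives exactly one unit above all negatives. Such a scorer exists because the labels are deterministic functions of the response, so one can set $f=R$. For $t\le 0$, the bound $(1-t)^2\ge 1$ still holds, so the pointwise argument goes through. The constant normalizations relating pair-level and population-level risks cancel on both sides, which I will note explicitly.
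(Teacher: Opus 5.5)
Your proof is correct, and it is the standard argument for this bound. In the realizable setting the scorers $f = t_n R$ show that, after removing the common factor $2p(1-p)$ and the constant contribution $\phi(0)$ of same-label pairs, $L^* = 0$ and $L_\phi^* = \inf_t \phi(t) = 0$ for every listed loss; the regret bound then follows from the pointwise domination $\mathbb{I}(t \le 0) \le c\,\phi(t)$, which needs only $\phi(0) \ge 1$ (respectively $\phi(0) = \ln 2$ for the logistic loss). The paper does not prove this theorem itself but imports it from Gao and Zhou (2012), and your pointwise comparison is essentially the noise-free argument behind that cited result.
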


Prior work on AUC optimization has primarily focused on training better classifiers on static datasets, whereas we study the relationship between reinforcement learning dynamics and relative calibration, and leverage AUC optimization techniques to achieve simultaneous improvements in both calibration and accuracy. 

\section{Why GRPO-style Objectives Degrade Calibration}
\paragraph{Empirical Observation}
Prior works (\citealp{yu2025dapo}) have shown that the KL regularization term in GRPO is detrimental to optimization, while other studies (\citealp{liu2025understanding}) indicate that removing the standard deviation term in advantage estimation yields an unbiased estimator. Accordingly, we adopt these optimized configurations when implementing GRPO. Empirically, as shown in Figure~\ref{parento-auc-grpo-gspo}(b), during GRPO training, the test-set accuracy consistently improves, whereas the AUC-mean steadily deteriorates. Below, we provide a theoretical explanation for this phenomenon.
\paragraph{Theoretical Explanation}
At its core, the GRPO gradient is a REINFORCE-style gradient with group relative advantage estimation, as expressed in Equation (\ref{grpo-grad}).
\begin{equation}
\label{grpo-grad}
\resizebox{\columnwidth}{!}{
$
\mathbb{E}_{o_{1:G}\sim \mathcal{D}}
\left[
\frac{1}{G}
\sum_{i=1}^{G}
\hat{A_i} \nabla_\theta lpm_\theta(o_i)
\right],
\hat{A_i} = R_i - \bar{R},
$}
\end{equation}
where $lpm_\theta(o_i)=\frac{1}{|o|}\sum_{t=1}^{|o|}\log \pi_\theta(o_t\mid o_{<t})$ (see Appendix~\ref{gradient} for a detailed derivation).
Consequently, this gradient can be equivalently rewritten as Equation (\ref{grpo-grad2}). Constant scaling factors are omitted for clarity in the subsequent analysis.
\begin{equation}
\label{grpo-grad2}
\resizebox{\columnwidth}{!}{
$
\mathbb{E}_{o_{1:G}\sim \mathcal{D}}
\left[
\sum_{1 \le i < j \le G}
\left(
\nabla_\theta lpm_\theta(o_i) - \nabla_\theta lpm_\theta(o_j)
\right)
\left(
R_i - R_j
\right)
\right].
$}
\end{equation}
By invoking the unbiasedness property of U-statistics (see Appendix~\ref{U} for a formal definition), namely that the expectation over a group of samples equals the expectation over a randomly sampled pair, it follows:
\begin{equation}
\label{grpo-pair-grad}
\resizebox{\columnwidth}{!}{$
\begin{alignedat}{2}
&\mathbb{E}_{o_1,o_2\sim\mathcal{D}}
\Big[
(\nabla_\theta lpm_\theta(o_1) - \nabla_\theta lpm_\theta(o_2))
(R_1 - R_2)
\Big]
\\
= \;&\nabla_\theta \mathbb{E}_{o_1,o_2\sim\mathcal{D}}
\Big[
(lpm_\theta(o_1) - lpm_\theta(o_2))
(R_1 - R_2)
\Big].
\end{alignedat}
$}
\end{equation}
Therefore, we have the following result:
\begin{theorem}
The gradient of the GRPO objective coincides with the gradient of the AUC optimization objective with surrogate loss $\phi(t) = -t$ and scoring function $f = lpm$ (i.e., perplexity). This surrogate loss is inconsistent for AUC optimization (see Appendix~\ref{proof of th3} for the proof).
\end{theorem}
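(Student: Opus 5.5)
The theorem has two parts, and I would prove them separately.

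\textbf{Part one: gradient coincidence.} This follows from the chain already displayed. With $\phi(t)=-t$ and $f=lpm_\theta$, the pairwise surrogate in Equation~(\ref{eq:surrogate}) becomes
\[
\Psi=-(R_i-R_j)\bigl(lpm_\theta(o_i)-lpm_\theta(o_j)\bigr).
\]
Minimizing the surrogate risk $L_\phi=\mathbb{E}[\Psi]$ is therefore the same as ascending $\mathbb{E}[(R_1-R_2)(lpm_\theta(o_1)-lpm_\theta(o_2))]$. Its gradient is exactly the right-hand side of Equation~(\ref{grpo-pair-grad}).

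It remains to justify the links leading to that equation.
\begin{itemize}
\item The identity $\sum_i (R_i-\bar R)\,g_i=\frac{1}{G}\sum_{i<j}(R_i-R_j)(g_i-g_j)$ holds by direct expansion. This gives Equation~(\ref{grpo-grad2}) up to a positive constant.
\item Taking the expectation over the group replaces the average over pairs with a single random pair. This is the U-statistic unbiasedness, using that the responses are i.i.d.
\item Exchanging $\nabla_\theta$ with the expectation treats the sampling distribution as fixed at $\pi_{\theta_{\text{old}}}$, since the policy gradient is evaluated on-policy at the old parameters. I would state this as the standard on-policy identification.
\end{itemize}
Positive constant factors do not change the descent direction, so the two gradients coincide.

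\textbf{Part two: inconsistency.} I would give a counterexample to Definition~\ref{def:auc-consistency}. The key observation is that $\phi(t)=-t$ is unbounded below. For a question whose sampled responses contain both correct and incorrect ones, the surrogate risk is
\[
L_\phi(f)=-\,\mathbb{E}\bigl[(R_1-R_2)(f(o_1)-f(o_2))\bigr].
\]
Taking $f=c\cdot g$ with any $g$ that ranks correct responses above incorrect ones, and letting $c\to\infty$, drives $L_\phi\to-\infty$. So $L_\phi^*=-\infty$, and the definition degenerates.

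A cleaner route avoids relying on infinite values. I would construct a sequence $f^{\langle n\rangle}$ with $L_\phi(f^{\langle n\rangle})\to L_\phi^*$ while $L(f^{\langle n\rangle})$ stays bounded away from $L^*=-1$. This exploits the fact that the linear loss rewards a large score gap on a few pairs even when many other pairs are misranked.

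\textbf{Concrete construction.} Take responses $o_a$ (correct) and $o_b,o_c$ (incorrect) with equal probability. Set
\[
f^{\langle n\rangle}(o_a)=n^2,\qquad f^{\langle n\rangle}(o_b)=0,\qquad f^{\langle n\rangle}(o_c)=2n^2.
\]
Then the pair $(a,b)$ is correctly ranked and $(a,c)$ is misranked, so the AUC is $1/2$, not the optimum $1$. The net linear surrogate is $-(n^2)+(n^2)=0$, which does not diverge, so this choice must be adjusted. A better choice is $f(o_c)=n^2+1$ and $f(o_b)=-n^3$. Then the surrogate tends to $-\infty=L_\phi^*$, while $(a,c)$ remains misranked, so $L-L^*\ge$ a positive constant. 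This violates consistency.

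Equivalently, I would note that Theorem~\ref{thm:consistency}'s hypotheses are not violated, since $\phi$ is convex, differentiable, non-increasing, and has $\phi'(0)=-1<0$. That theorem is only sufficient, though, and implicitly requires $\phi$ bounded below. So an explicit counterexample is needed rather than an appeal to it.

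\textbf{Main obstacle.} The difficulty is making the inconsistency argument rigorous when $L_\phi^*=-\infty$. The plan is to handle it with the sequence above, or alternatively to restrict to bounded score classes, such as $f=lpm\in(-\infty,0]$. Under that restriction the optimum of the linear surrogate pushes all incorrect responses to $-\infty$ but is indifferent to ties among the correct ones, leaving the AUC non-optimal along some minimizing sequences.
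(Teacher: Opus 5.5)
Your proposal is correct. The first part follows the paper's chain exactly: the pairwise identity $\sum_i (R_i-\bar R)g_i=\frac{1}{G}\sum_{i<j}(R_i-R_j)(g_i-g_j)$, then U-statistic unbiasedness, then moving $\nabla_\theta$ outside an expectation whose sampling distribution is frozen at $\pi_{\theta_{\mathrm{old}}}$. You state that last assumption explicitly; the paper leaves it implicit.

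Your second part rests on the same fact as the paper: the linear surrogate has $L_\phi^*=-\infty$, and it can be driven there while a correct--incorrect pair stays misranked. The paper's mechanism is simpler. AUC is invariant under $f\mapsto\alpha f$ for $\alpha>0$, while $\mathcal{L}_{-t}(\alpha f)=\alpha\,\mathcal{L}_{-t}(f)$. So any scorer with $\mathcal{L}_{-t}(f)<0$ and $\mathrm{AUC}(f)<1$ gives the counterexample $f_m=\alpha_m f$. Your abandoned first attempt failed precisely because the base scorer $(1,0,2)$ on $(o_a,o_b,o_c)$ has surrogate risk $0$. Scaling instead $g(o_a)=1$, $g(o_b)=-2$, $g(o_c)=2$, which has negative risk and misranks $(o_a,o_c)$, works directly. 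Your non-homogeneous sequence is valid and shows the mechanism concretely: one diverging margin masks a fixed misranking. The scaling argument is shorter, and it is what supports the paper's extension to all reward-only advantage estimators, whose induced objectives are linear, hence homogeneous, in the score.

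Three side remarks should be corrected.
\begin{itemize}
\item Your construction does not avoid infinite values. It too has $L_\phi(f^{\langle n\rangle})\to-\infty=L_\phi^*$. That is fine, because Definition~\ref{def:auc-consistency} makes sense in the extended reals, and the paper's argument relies on the same thing.
\item You are right that $\phi(t)=-t$ meets the literal hypotheses of Theorem~\ref{thm:consistency}, a tension the paper never addresses. But ``the theorem is only sufficient'' does not resolve it: a sufficient condition whose hypotheses hold still yields its conclusion. The real resolution is that the theorem tacitly needs something like $\phi$ bounded below, so that $L_\phi^*$ is finite, and the linear loss violates this.
\item Drop the ``bounded score classes'' alternative. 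The class $(-\infty,0]$ works only because it is unbounded below. On a genuinely bounded class such as $[-M,0]$, take finitely many responses with deterministic labels and both labels present. The linear risk then has a unique minimizer, with correct responses at $0$ and incorrect ones at $-M$. Every minimizing sequence converges to it and AUC becomes optimal, so the linear loss would be consistent there. Ties among correct responses are irrelevant in any case, since AUC only compares correct--incorrect pairs.
\end{itemize}
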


That is, while optimizing the GRPO objective (which effectively optimizes accuracy), the true AUC is not theoretically guaranteed to improve simultaneously. In practice, GRPO tends to overfit easy samples, leading to a sharpened output distribution, in which the perplexities of both positive and negative samples decrease concurrently, thereby degrading AUC. This theoretical prediction aligns well with our empirical observations shown in Figure~\ref{parento-auc-grpo-gspo}(b).

The above analysis extends beyond GRPO to any algorithm relying on  reward-only advantage estimator: for instance, Figure~\ref{parento-auc-grpo-gspo}(c) demonstrates similar calibration degradation in GSPO, where although optimization is sequence-level, advantage estimation remains reward-based. Evaluating samples \textbf{purely based on reward while ignoring uncertainty} inevitably yields an AUC-inconsistent surrogate (proof in Appendix~\ref{proof of th3}), thereby training the model to pursue only high reward values rather than both honesty and accuracy.

\section{Method}
To address the issues in Section~3, we propose an uncertainty-aware advantage estimation derived from a consistent AUC surrogate. 
\subsection{Calibration-Aware Policy Optimization}
\paragraph{Consistent Surrogate Objective}
Building upon the theoretical analysis in Section~3, we propose to replace the inconsistent surrogate loss implicitly induced by GRPO with a \emph{logistic surrogate loss}: $\phi_\tau(t) = \log\big(1 + \exp(-t/\tau)\big)$. The temperature parameter $\tau > 0$ controls the smoothness of the surrogate. This surrogate is AUC-consistent and admits regret bound (Theorem~\ref{regret bound}). It theoretically guarantees that optimizing this surrogate objective leads to an improvement in AUC. 

The corresponding policy optimization objective:
\begin{equation}
\label{eq:on-policy-consistent-objective}
J_{\text{logistic}}(\theta)
=
-\mathbb{E}_{o_1, o_2 \sim \mathcal{D}}
\Big[
\log\big(1 + \exp(-t/\tau)\big)
\Big],
\end{equation}
where
\begin{equation}
t = \big( lpm_\theta(o_1) - lpm_\theta(o_2) \big)\, \big( R_1 - R_2 \big).
\end{equation}

 We rewrite the above objective as an expectation over a set of responses and take its gradient:
\begin{equation}
\label{CAPO-gradient}
\resizebox{\columnwidth}{!}{
$
\nabla_\theta J_{\text{logistic}}(\theta)=\mathbb{E}_{o_{1:G}\sim \mathcal{D}}
\left[
\frac{1}{G}
\sum_{i=1}^{G}
\tilde{A_i}
\nabla_\theta lpm_\theta(o_i)
\right],
$}
\end{equation}
\begin{equation}
\label{CAPO-advantage}
\resizebox{\columnwidth}{!}{
$
\tilde{A}_i
 =
\begin{cases}
\displaystyle
-\sum_{j: R_j=0} \phi'\!\big( lpm_\theta(o_i) - lpm_\theta(o_j) \big),
& R_i=1, \\[6pt]
\displaystyle
\sum_{j: R_j=1} \phi'\!\big( lpm_\theta(o_j) - lpm_\theta(o_i) \big),
& R_i=0,
\end{cases}
$}
\end{equation}
where $\phi'(t)=-\sigma(-t)$, and $\sigma(\cdot)$ denotes the sigmoid function. 

\paragraph{Denoising via Reference-Model-Based Masking.}
Outcome-based binary rewards fail to assess intermediate reasoning. Consequently, they risk penalizing near-correct logic due to minor errors, while simultaneously reinforcing spurious reasoning that yields correct answers by chance. Such gradient noise can destabilize training or even induce model collapse. 

We leverage the inherent calibration of the reference model (i.e., base model) to assess the quality of the reasoning process: high reference-model perplexity (PPL) typically indicates syntactic or logical flaws, whereas low PPL suggests coherence. This approach is that it requires no additional training overhead while significantly improving training stability.

Based on this, we propose a masking strategy to filter noisy signals. Specifically, we exclude correct responses with reference-model PPL exceeding a threshold \texttt{ref-high} (likely lucky guesses) and incorrect responses with reference-model PPL below \texttt{ref-low} (likely near-correct reasoning penalized as failure). These thresholds are robust and simple to set, empirically corresponding to the upper and lower quartiles of the reference model’s PPL distribution.
\paragraph{Final Objective.}
By replacing the advantage estimation in the GRPO objective (\ref{grpo-obj}) with that in Equation (\ref{CAPO-advantage}), and applying noise mask, we obtain the objective under the behavior policy $\pi_{old}$:
\begin{equation}
\resizebox{\columnwidth}{!}{
$
\begin{aligned}
&
J_{CAPO}(\theta)
\\
&
=\mathbb{E}_{(q,a)\sim \mathcal{D},\, \{o_i\}_{i=1}^{G} \sim \pi_{\theta_{\mathrm{old}}}(\cdot \mid q)}
\Bigg[
\frac{1}{G}\sum_{i=1}^G \frac{1}{|o_i|}\sum_{t=1}^{|o_i|}\min
\\
&
\Big(
r_{i,t}(\theta)\hat{A}_{i}^{CAPO},\mathrm{clip}\big(r_{i,t}(\theta),1-\epsilon,1+\epsilon\big)\hat{A}_{i}^{CAPO}
\Big)\Bigg],
\end{aligned}
$}
\end{equation}
where
\begin{equation}
\hat{A}_{i}^{CAPO}
 =m(o_i)\tilde{A}_{i}~,
\end{equation}
\begin{equation}
\resizebox{\columnwidth}{!}{
$
m(o) =
\begin{cases}
\mathbb{I}\!\left[\mathrm{PPL}_{\text{ref}}(o) \le \texttt{ref-high}\right], 
& \text{if } R(o)=1,\\[6pt]
\mathbb{I}\!\left[\mathrm{PPL}_{\text{ref}}(o) \ge \texttt{ref-low}\right],
& \text{if } R(o)=0.
\end{cases}
$}
\end{equation}
This yields the optimization objective of CAPO. While the PPO-style formulation involves first-order approximation, the core logistic surrogate provides gradients aligned with relative calibration, offering strong theoretical justification. Subsection~4.2 and experiment results further corroborate this alignment.
\subsection{Gradient Analysis }
\begin{figure}
  \centering
  \includegraphics[width=0.8\columnwidth]{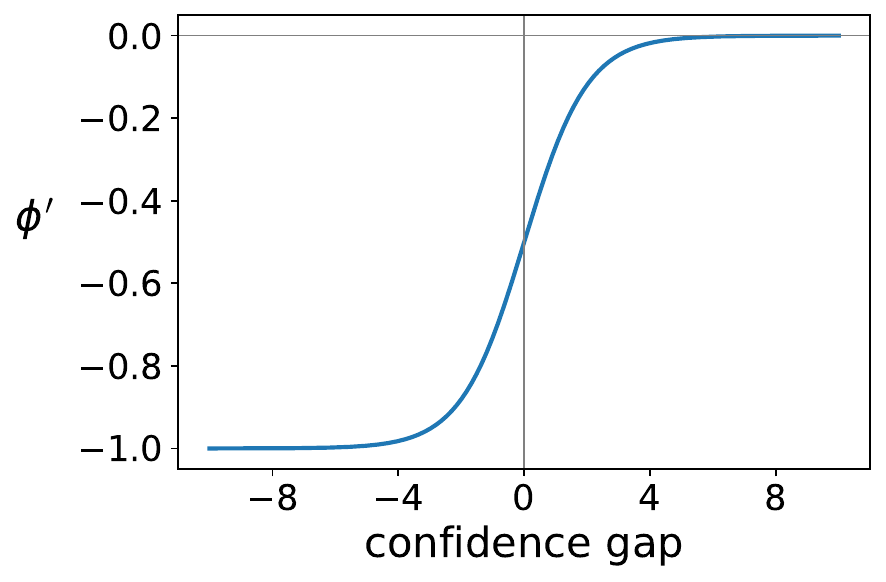}
  \caption{The relationship between advantage and the confidence gap of a correct--wrong response pair.}
  \label{advantage}
\end{figure}
This subsection conducts a comparative analysis on the gradients between GRPO and our proposed objective function.
The gradient induced by our objective is given by Equation (\ref{CAPO-gradient}) (mask term obviated for brevity). $\phi'(t)$ is illustrated in Figure~\ref{advantage}. As can be seen, its magnitude decreases as the confidence gap between correct and incorrect responses increases. This indicates that the gradient places greater emphasis on correct samples with relatively high PPL and incorrect samples with low PPL, while suppressing the influence of samples that are already confidently and correctly ranked.

Samples near the misranking boundary in terms of reference-model PPL represent instances where the model’s confidence estimates are inaccurate; they also represent highly informative samples in the model's decision space.
Consequently, these are the key samples for improving both accuracy and AUC. Our method appropriately amplifies their gradients, while the masking mechanism removes update instability caused by extreme noisy samples. In contrast, GRPO assigns the same reward-based advantage to all positive and negative samples within a group, without accounting for their uncertainty or noise.
\section{Experiment}
\begin{figure*}[t]
    \centering
    \includegraphics[width=\textwidth]{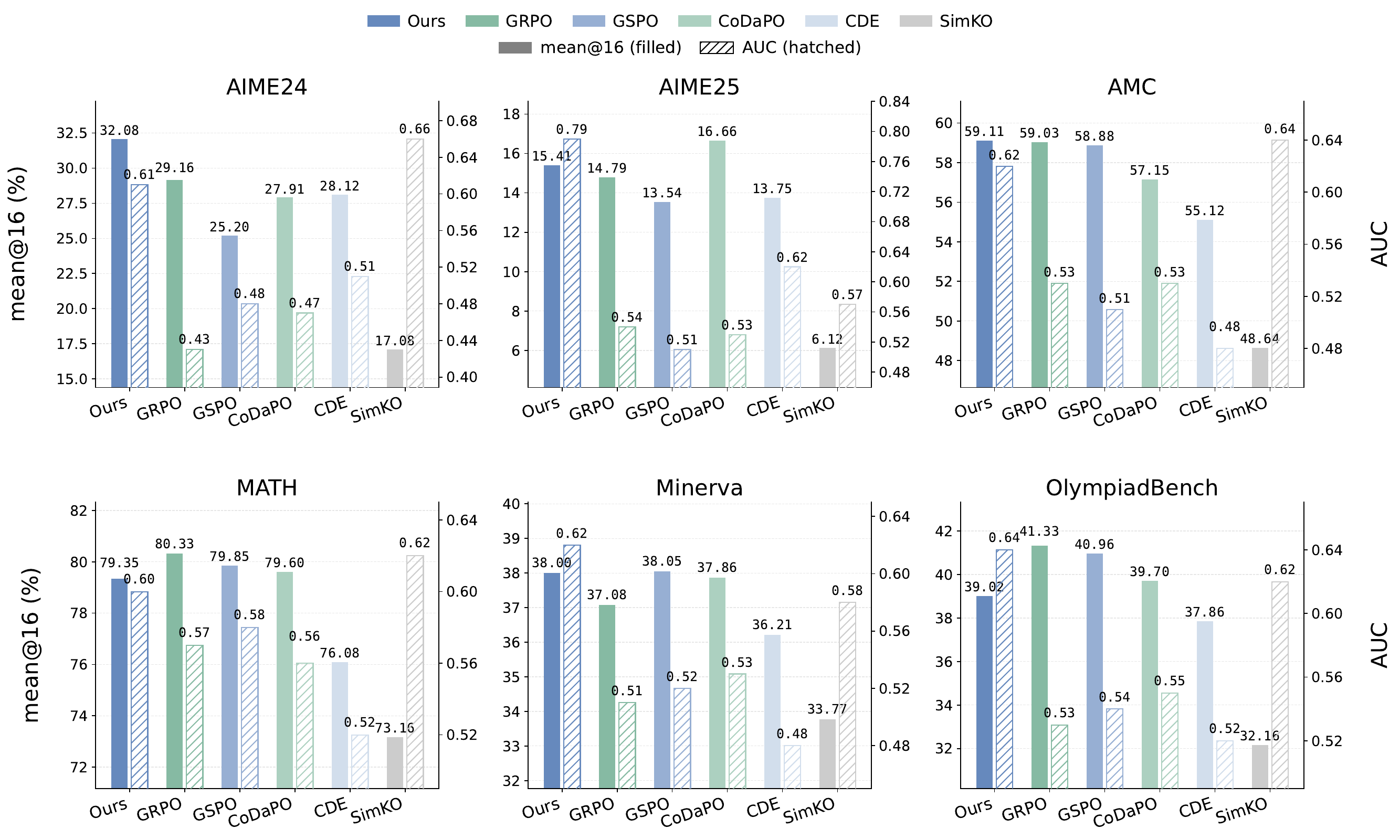}
    \caption{Results of calibration (measured by AUC-mean) and accuracy (measured by mean@16)
    for our method and all baselines on Qwen2.5-Math-7B across six test benchmarks.}
    \label{main_result-7b}
\end{figure*}
\begin{table*}[t]
\centering
\vspace{0.5em}
\begin{tabular}{l|cccccc}
\toprule
\textbf{Model} 
& \textbf{Ours} 
& \textbf{GRPO} 
& \textbf{GSPO} 
& \textbf{CoDaPO} 
& \textbf{CDE} 
& \textbf{SimKO} \\
\midrule
Qwen2.5-Math-1.5B
& \textbf{25.33} 
& 20.33 
& 20.00 
& 21.67 
& 16.67 
& 11.67 \\
Qwen2.5-Math-7B
& \textbf{38.33} 
& 33.33 
& 32.21 
& 31.66 
& 31.66 
& 23.33 \\
\bottomrule
\end{tabular}

\caption{Inference-time scaling accuracy (\%) of different methods on the AIME 2024 and AIME 2025 datasets.}
\label{tab:inference_scaling_acc}
\end{table*}

\subsection{Setting}
\paragraph{Models and Datasets.}
We use Qwen2.5-Math-1.5B and Qwen2.5-Math-7B \citep{yang2024qwen25mathtechnicalreportmathematical} as the base models. The training and validation sets are constructed by randomly sampling 20k and 240 instances, respectively, from the DeepScaler dataset \citep{luo2025deepscaler}. We evaluate all models on six benchmark datasets: AIME~2024, AIME~2025, MATH~500 (\citealp{lightman2023let};\citealp{hendrycks2021measuring}), AMC~2023, Minerva \citep{lewkowycz2022solving}, and OlympiadBench \citep{he2024olympiadbench}.

\paragraph{Baselines and Methods.}
The baselines we compare against fall into two categories: 1) methods designed to improve reasoning capability, including GRPO \citep{shao2024deepseekmath} and GSPO \citep{zheng2025groupsequencepolicyoptimization}. 2) methods proposed to address the calibration issues of GRPO, namely CDE \citep{dai2025cde}, CoDaPO \citep{zhou2025codapo}, and SimKO \citep{peng2025simko}. For our method, $\tau$ is set to 0.6 for the 1.5B model, and 0.5 for the 7B.
Hyperparameter details are provided in Appendix~\ref{experiment setup}.

\paragraph{Evaluation Metrics.}
Following \citet{yue2025does}, we report mean@8 accuracy on the validation set and mean@16 accuracy on the test set.
Model calibration is evaluated by AUC \citep{vashurin2025uncertainty} and Hallucination mitigation by the Precision-Coverage curve.

\begin{figure*}[t]
  \centering
  \includegraphics[width=0.45\linewidth]{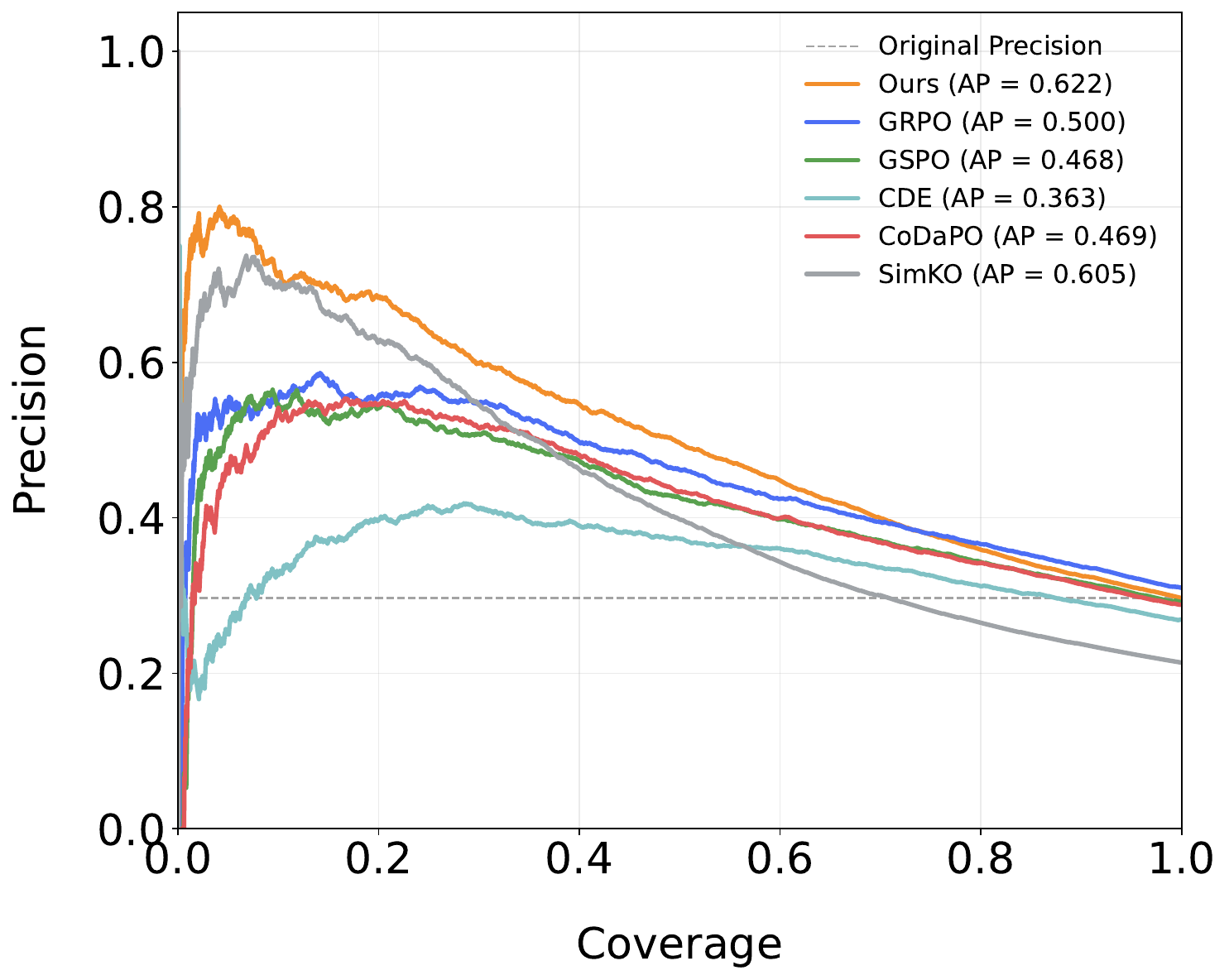} 
  \hspace{0.04\linewidth}
  \includegraphics[width=0.45\linewidth]{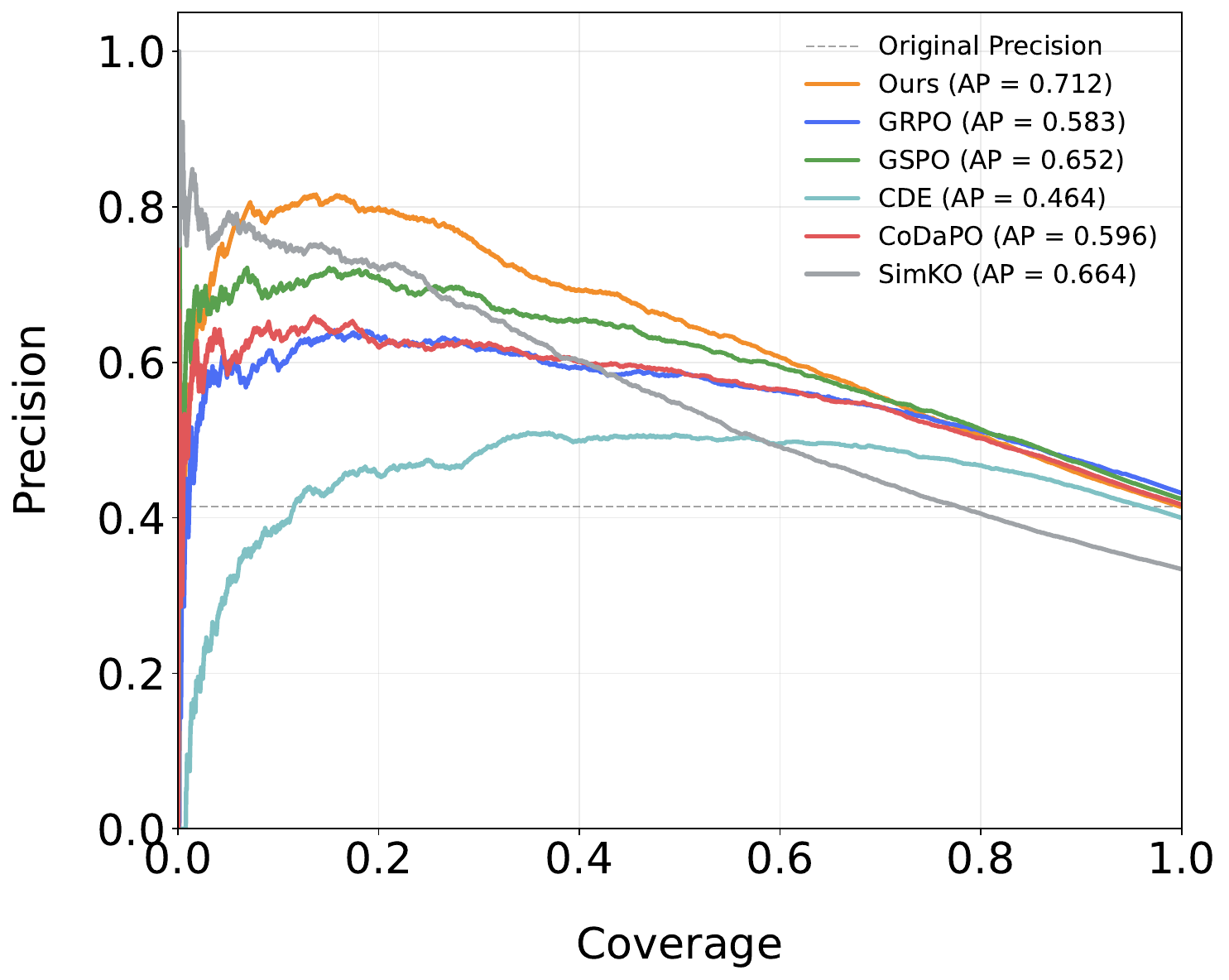}
  \caption {Precision-Coverage curves of our method and all baselines on six test benchmarks for the Qwen2.5-Math-1.5B (left) and Qwen2.5-Math-7B models (right).}
  \label{pc_curve}
\end{figure*}

\subsection{Main Results}
\paragraph{Improved Calibration with Preserved Accuracy}
Extensive experiments demonstrate that our method consistently improves calibration (AUC) across all benchmarks without sacrificing accuracy. As shown in Figure~\ref{main_result-7b} and Appendix~\ref{additional experiment results} Figure~\ref{main_result-1.5b}, compared to GRPO and GSPO, our method significantly improves calibration on all test datasets. In particular, on AIME 2025, CAPO increases AUC by approximately 15\% for the 1.5B model (from 0.63 to 0.78) and by about 25\% for the 7B model (from 0.54 to 0.79). Meanwhile, the mean@8 accuracy achieved by our method is comparable to or even better than that of GRPO, and it attains the highest accuracy on AIME 2024, AIME 2025, and Minerva.

In contrast, other methods aimed at addressing the calibration issue of GRPO provide only limited improvements in AUC and, in some cases, even slightly degrade it on certain datasets. Moreover, none of these methods are able to preserve consistent high accuracy. In particular, SIMKO severely damages accuracy, exhibiting an approximately 12\% drop on AMC and a 7.7\% drop on AIME 2024 compared to GRPO. 

\paragraph{Stable and Consistent Optimization Dynamics}
We further present training dynamics to demonstrate the stability and consistency of our method. Figure~\ref{parento-auc-grpo-gspo}(b)(c) shows AUC-mean on AIME 2024 and AIME 2025 throughout training. While GRPO and GSPO exhibit a gradual degradation in calibration as optimization proceeds, our method steadily improves AUC over training steps, demonstrating sustained calibration optimization rather than checkpoint-specific effects. Figure~\ref{validation-acc} in Appendix~\ref{additional experiment results} reports validation mean@8 accuracy, where only our method maintains stable performance comparable to GRPO for both 1.5B and 7B models. Together, these results indicate that our approach achieves more stable and reliable optimization behavior throughout training.

\paragraph{Hallucination Mitigation.}
We evaluate hallucination mitigation through the precision-coverage trade-off, where a model abstains from answering if its confidence falls below a threshold. Varying this threshold traces a precision-coverage curve that characterizes the fraction of answered questions (coverage) and the accuracy among answered ones (precision). An ideal model should maintain high precision and coverage level. As shown in Figure~\ref{pc_curve}, our method achieves a Pareto-optimal trade-off, consistently dominating GRPO, GSPO, and other baselines. Notably, while SimKO shows high precision at low coverage due to its ranking capability, its overall lower accuracy causes a sharp precision drop as coverage increases. Conversely, although GRPO matches our accuracy, its inferior calibration leads to consistently lower precision. By simultaneously enhancing calibration and accuracy, our method achieves a more favorable precision–coverage trade-off, enabling reliable hallucination mitigation with minimal abstention.

\paragraph{Impact of Calibration on Inference-Time Scaling}
The effectiveness of confidence-based inference-time scaling depends jointly on a model’s accuracy and its calibration quality. We adopt the inference-time scaling algorithm proposed in \citet{zhou2025bridging} (see Appendix~\ref{tts detail} for implementation details). As shown in Table~\ref{tab:inference_scaling_acc}, our method achieves the highest accuracy under inference-time scaling (5\% over GRPO for 1.5B and 7B model), clearly demonstrating the critical role of calibration in inference-time scaling and the effectiveness of our approach.

\subsection{Ablation}
\paragraph{Sensitivity of Hyperparameters}
As detailed in Appendix~\ref{additional experiment results} Figure~\ref{tau}, we conduct sensitivity tests on $\tau \in \{0.4, 0.6, 1.0\}$ and different masking intervals. In both cases, the performance impact is marginal, confirming that our method is robust to the specific selection of $\tau$ and the \texttt{ref-high}/\texttt{ref-low} thresholds.

\paragraph{Ablation of the Denoising Strategy}
We further compare the performance of our method with and without the masking mechanism. As shown in Appendix~\ref{additional experiment results} Figure~\ref{ref}, removing the masking mechanism causes the model entropy to gradually increase, leading to early stagnation or even degradation in accuracy. In contrast, with the masking mechanism enabled, the model entropy remains stable and accuracy improves steadily. These observations demonstrate that noisy samples can severely disrupt optimization stability, and highlight the effectiveness of the proposed masking mechanism. Furthermore, applying only masking to GRPO fails to improve AUC, demonstrating that a consistent surrogate objective is essential.

\section{Conclusion}
In this paper, we have quantitatively analyzed and mitigated the calibration degradation induced by GRPO-style algorithms. We show that this degradation stems from reward-only advantage estimation. This estimation causes the gradient unaligned with relative calibration. To address this issue, we propose CAPO, which incorporates uncertainty-aware advantage estimation grounded in a consistent logistic AUC surrogate, together with a denoising mechanism to ensure stable training. Extensive empirical results demonstrate that CAPO trains models that achieve both high accuracy and calibration, leading to superior performance in hallucination mitigation and inference-time scaling.

\section*{Limitations}
This study mainly evaluates the proposed method on mathematical reasoning benchmarks, which are also the primary focus of GRPO-style Reinforcement Learning with Verifiable Rewards (RLVR) methods. However our method is designed to be broadly applicable to training reasoning models. Its effectiveness on other reasoning tasks such as logical puzzles, commonsense reasoning, and open-domain question answering, remains to be further validated.

\section*{Acknowledgment}
This work is supported by National Key R\&D Program of China (2025ZD0122003), and in part by the Beijing Major Science and Technology Project under Contract no. Z251100008425009.
\bibliography{custom}

\appendix

\section{Additional Proof}
\subsection{Derivation of the Gradient of the GRPO Objective}
\label{gradient}
\paragraph{Proof idea.}
We start from the full PPO clipped surrogate with importance sampling (IS) ratio. As proved by previous works \citep{schulman2017proximalpolicyoptimizationalgorithms, schulman2015trust}, IS ratio ensures the surrogate uses tokens drawn from the old policy while producing
a unbiased gradient estimate for the current policy (given a fixed state).
The clipping further enforces a trust-region-style constraint, and under a first-order
approximation (small policy update), the clipped surrogate has the same leading-order
gradient direction as the original optimization objective. Hence, PPO's gradient matches the
REINFORCE-style gradient. GRPO differs from PPO only by replacing the advantage
estimator with a group-wise baseline $\hat A_i = R_i - \bar R$, so substituting this
advantage into the REINFORCE form yields the corresponding GRPO gradient.

\paragraph{Proof}
Let $r_t(\theta) = \frac{\pi_\theta(a_t\mid s_t)}{\pi_{\theta_{\mathrm{old}}}(a_t\mid s_t)}$
denote the importance sampling ratio.
The PPO clipped objective is
\begin{equation}
\label{eq:ppo-clip-obj}
\begin{aligned}
L^{\mathrm{PPO}}(\theta)
&=
\mathbb{E}_{(s_t,a_t)\sim \pi_{\theta_{\mathrm{old}}}}
\Big[
\min\!\Big(
r_t(\theta)A_t,\\
&\qquad
\mathrm{clip}(r_t(\theta),1-\epsilon,1+\epsilon)\,A_t
\Big)
\Big].
\end{aligned}
\end{equation}
To first order, the gradient of the above off-policy objective reduces to the on-policy REINFORCE-form gradient:
\begin{equation}
\resizebox{\columnwidth}{!}{$
\label{reinforce gradient}
\nabla_\theta L^{\mathrm{PPO}}(\theta)
=\nabla_\theta\mathbb{E}_{(s_t,a_t)\sim \pi_{\theta}}
\left[
A_t\,\nabla_\theta\log\pi_\theta(a_t\mid s_t)
\right],
$}
\end{equation}

The GRPO objective is obtained by estimating the advantage in PPO by group-wise reward differences.
Given a group of $G$ responses $\{o_i\}_{i=1}^G$ with rewards $\{R_i\}$, GRPO uses
\begin{equation}
\tilde{A_i} = R_i - \bar R,
\qquad
\bar R = \frac{1}{G}\sum_{i=1}^G R_i.
\end{equation}
Substituting this $\tilde{A_i}$ into the REINFORCE-form gradient~\ref{reinforce gradient} yields:
\begin{equation}
\resizebox{\columnwidth}{!}{$
\mathbb{E}_{o_{1:G}\sim \mathcal{D}}
\Bigg[
\frac{1}{G}\sum_{i=1}^G \frac{1}{|o_i|}\sum_{t=1}^{|o_i|} (R_i-\bar R)\,\nabla_\theta\pi_\theta(o_{i,t}\mid o_{i,<t})
\Bigg].
$}
\end{equation}
Finally, using the definition
\begin{equation}
lpm_\theta(o_i)
=
\frac{1}{|o_i|}\sum_{t=1}^{|o_i|} \log \pi_\theta(o_{i,t}\mid o_{i,<t}),
\end{equation}
we can obtain Equation~\ref{grpo-grad}.

\subsection{Definition of U-Statistics}
\label{U}

Let $\{Z_1, Z_2, \dots, Z_n\}$ be i.i.d.\ random variables drawn from an underlying distribution $\mathcal{D}$.
Consider a symmetric measurable function (kernel)
\[
h: \mathcal{Z}^k \rightarrow \mathbb{R},
\]
where $k \ge 1$ denotes the order of the kernel and symmetry means that
\[
h(z_1, \dots, z_k)
=
h(z_{\pi(1)}, \dots, z_{\pi(k)})
\]
for any permutation $\pi$ of $\{1,\dots,k\}$.

\paragraph{Definition (U-statistic).}
The \emph{U-statistic} associated with kernel $h$ is defined as
\begin{equation}
U_n
\;\triangleq\;
\binom{n}{k}^{-1}
\sum_{1 \le i_1 < \dots < i_k \le n}
h(Z_{i_1}, \dots, Z_{i_k}).
\label{eq:ustat_def}
\end{equation}
The U-statistic $U_n$ is an estimator of the population quantity
\begin{equation}
\theta
\;\triangleq\;
\mathbb{E}_{(Z_1,\dots,Z_k)\sim\mathcal{D}^k}
\bigl[
h(Z_1,\dots,Z_k)
\bigr].
\label{eq:ustat_target}
\end{equation}

\paragraph{Unbiasedness.}
A key property of U-statistics is that they provide unbiased estimators of the target quantity $\theta$.

\begin{proposition}[Unbiasedness of U-statistics]
\label{prop:ustat_unbiased}
For any symmetric kernel $h$ with finite expectation,
\begin{equation}
\mathbb{E}[U_n] = \theta.
\end{equation}
\end{proposition}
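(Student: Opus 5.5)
The proof will be a direct computation using linearity of expectation and exchangeability of the i.i.d.\ sample. First I write $\mathbb{E}[U_n]$ using the definition in \eqref{eq:ustat_def} and pull the normalising constant outside:
\[
\mathbb{E}[U_n]=\binom{n}{k}^{-1}\sum_{1\le i_1<\dots<i_k\le n}\mathbb{E}\bigl[h(Z_{i_1},\dots,Z_{i_k})\bigr].
\]
Linearity is legitimate here because the sum is finite and each summand has finite expectation by hypothesis.

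The key step is to show that each summand equals $\theta$. Fix indices $i_1<\dots<i_k$, which are distinct. Because $Z_1,\dots,Z_n$ are i.i.d.\ with law $\mathcal{D}$, the subvector $(Z_{i_1},\dots,Z_{i_k})$ has joint law $\mathcal{D}^k$. This is exactly the law of $(Z_1,\dots,Z_k)$ in \eqref{eq:ustat_target}. Since $h$ is measurable and integrable under $\mathcal{D}^k$, we get $\mathbb{E}[h(Z_{i_1},\dots,Z_{i_k})]=\theta$. The sum has exactly $\binom{n}{k}$ terms, each equal to $\theta$, so multiplying by $\binom{n}{k}^{-1}$ gives $\mathbb{E}[U_n]=\theta$.

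Symmetry of $h$ is not actually needed for unbiasedness. It only guarantees that summing over ordered index sets rather than unordered ones gives the same statistic. I will remark on this so the role of each hypothesis is clear.

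The only point that needs care is the distributional claim for the index subset. It uses distinctness of the indices, which the strict ordering guarantees, and independence. It would fail for a V-statistic, whose sum includes repeated indices. I will also note that the proof implicitly requires $n\ge k$.

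Finally, for how this is used in \eqref{grpo-pair-grad}: take $k=2$ and the kernel $h(o_i,o_j)=(\nabla_\theta lpm_\theta(o_i)-\nabla_\theta lpm_\theta(o_j))(R_i-R_j)$. This kernel is symmetric. The proposition then identifies the expectation of the group average of $h$ with its expectation over a single random pair. The pairwise sum in \eqref{grpo-grad2} is $\binom{G}{2}$ times this average, and that constant factor is the one the paper omits.
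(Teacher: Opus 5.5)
Your proof is correct and takes the same route as the paper: linearity of expectation over the $\binom{n}{k}$ summands, each of which has expectation $\theta$ because distinct indices of an i.i.d.\ sample have joint law $\mathcal{D}^k$. Your side remarks are accurate details the paper leaves unstated: symmetry of $h$ is not needed for unbiasedness, $n \ge k$ is implicitly required, and the pairwise GRPO kernel with $k=2$ is indeed symmetric.
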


\begin{proof}
By linearity of expectation,
\begin{align}
\mathbb{E}[U_n]
&=
\binom{n}{k}^{-1}
\sum_{1 \le i_1 < \dots < i_k \le n}
\mathbb{E}
\bigl[
h(Z_{i_1}, \dots, Z_{i_k})
\bigr].
\end{align}
Since $\{Z_i\}$ are i.i.d., each summand has expectation $\theta$.
There are exactly $\binom{n}{k}$ such terms, which yields
\begin{equation}
\mathbb{E}[U_n]
=
\binom{n}{k}^{-1}
\cdot
\binom{n}{k}
\cdot
\theta
=
\theta.
\end{equation}
\end{proof}

\paragraph{Relation to the main text.}
In the main text, Equation (\ref{grpo-grad2}) can be viewed as a U-statistic with kernel order $k=2$.
Consequently, it is an unbiased estimator of the expectation in Equation (\ref{grpo-pair-grad}).

\subsection{Completion of the Proof of Theorem~3}
\label{proof of th3}
\paragraph{Inconsistency of $\phi(t)=-t$ for AUC optimization.}
AUC depends only on the \emph{ordering} induced by the scoring function.
In particular, for any $\alpha>0$, scaling the scores does not change AUC:
\begin{equation}
\mathrm{AUC}(f) ~=~ \mathrm{AUC}(\alpha f).
\label{eq:auc-scale-invariant}
\end{equation}
However, the surrogate risk with $\phi(t)=-t$ is \emph{scale-sensitive} and \emph{unbounded}:
\begin{equation}
\mathcal{L}_{-t}(\alpha f)
=
-\mathbb{E}\bigl[\alpha f(Z^+)-\alpha f(Z^-)\bigr]
=
\alpha\,\mathcal{L}_{-t}(f).
\label{eq:surrogate-scales}
\end{equation}
Take any nontrivial scoring function $f$ with $\mathcal{L}_{-t}(f) < 0$ and $AUC(f) < 1$ (not optimal). Consider the sequence
$f_m = \alpha_m f$ with $\alpha_m\uparrow +\infty$.
Then $\mathrm{AUC}(f_m)=\mathrm{AUC}(f)$ for all $m$ by \eqref{eq:auc-scale-invariant},
but $\mathcal{L}_{-t}(f_m)\to -\infty$ by \eqref{eq:surrogate-scales}.
Hence, one can drive the surrogate objective arbitrarily close to its infimum
without improving (or even changing) AUC.
Therefore, minimizing the $\phi(t)=-t$ surrogate does not guarantee approaching
an AUC-optimal scorer, i.e., this surrogate is not consistent for AUC optimization.
\paragraph{Generalizing to Reward-Only Advantage Estimators.}
It is worth noting that this result is not limited to GRPO, but applies to all algorithms that employ reward-only advantage estimators. Since the objectives induced by this class of advantage estimators are always linear with respect to the confidence function, they necessarily satisfy the property in \eqref{eq:surrogate-scales}. As a result, similar scaling counterexamples can be constructed to show that these objectives are not consistent AUC surrogate losses.
\begin{table}[t]
\centering
\begin{tabular}{l c}
\toprule
\textbf{Configuration} & \textbf{Value} \\
\midrule
train\_batch\_size          & 128 \\
ppo\_mini\_batch\_size      & 64 \\
max\_prompt\_length         & 1050 \\
max\_response\_length       & 3046 \\
\texttt{ref-high}        & 2.5 \\
\texttt{ref-low}         & 1.05 \\
$\tau$ (1.5B model)                      & 0.6 \\
$\tau$ (7B model)                      & 0.5 \\
$\epsilon$             & 0.2 \\
learning rate               & $1\times10^{-6}$ \\
entropy\_coeff              & 0 \\
kl\_loss\_coef               & 0 \\
rollout.n                   & 8 \\
validation.rollout.n               & 16 \\
rollout.temperature         & 1.0 \\
validation.temperature     & 1.0 \\
total\_steps(for 1.5B)               & 600 \\
total\_steps(for 7B)               & 400 \\
\bottomrule
\end{tabular}
\caption{Key hyperparameter settings used in our experiments.}
\label{tab:hyperparams}
\end{table}

\subsection{Pseudocode of the Inference-Time Scaling Algorithm}
\label{tts detail}

The inference-time scaling algorithm in Algorithm~\ref{alg:ppl_consistency}
is a direct instantiation of the \emph{Perplexity Consistency} principle
proposed in prior work \citealp{zhou2025bridging}. In our implementation, N is 16.

The core assumption of Perplexity Consistency is that if the model is well-calibrated, responses corresponding to the correct
answer should, on average, exhibit lower perplexity (higher likelihood)
than incorrect ones.

Instead of selecting the single response with minimum perplexity,
the proposed method aggregates confidence at the \emph{answer level}.
All responses that yield the same extracted final answer are grouped,
and their probabilities are summed to form an aggregated confidence score.
This aggregation achieves two desirable properties:

\begin{itemize}
    \item \textbf{Self-consistency}: answers that are repeatedly produced
    by the model with high likelihood are reinforced.
    \item \textbf{Noise robustness}: low-probability or spurious generations
    contribute negligibly to the final decision.
\end{itemize}

Importantly, the entire procedure relies solely on the model's
\emph{endogenous confidence}, quantified by perplexity.
As such, it constitutes a lightweight yet effective form of inference-time
scaling that leverages both the model's calibration quality and reasoning ability.
\begin{algorithm}[t]
\caption{Perplexity-Consistency Based Inference-Time Scaling}
\label{alg:ppl_consistency}
\begin{algorithmic}[1]
\State \textbf{Input:} question $q$, number of samples $N$, model $\pi_\theta$
\State \textbf{Output:} final predicted answer $\hat a$
\State Sample $N$ independent responses $\{y_i\}_{i=1}^N \sim \pi_\theta(\cdot \mid q)$

\For{$i \gets 1$ \textbf{to} $N$}
    \State Extract final answer $a_i$ from response $y_i$
    \State Compute token-level average log-probability:
    \Statex \hspace{1.2em} $\ell_i \gets \frac{1}{|y_i|}\sum_{t=1}^{|y_i|}\log p_\theta\!\big(y_{i,t}\mid y_{i,<t}, q\big)$
\EndFor

\State Group responses by identical extracted answers:
\Statex \hspace{1.2em} $\mathcal{G}(a) \gets \{\, i \mid a_i = a \,\}$

\ForAll{candidate answers $a$}
    \State Compute aggregated confidence:
    \Statex \hspace{1.2em} $C(a) \gets \sum_{i\in\mathcal{G}(a)} \exp(\ell_i)$
\EndFor

\State $\hat a \gets \arg\max_a C(a)$
\State \Return $\hat a$
\end{algorithmic}
\end{algorithm}
\section{Additional Experiment details}
\subsection{Experimental Setup}
Experimental configurations for the 1.5B and 7B models are detailed in Table~\ref{tab:hyperparams}. All methods share these identical settings, with method-specific hyperparameters following their original papers. For our method, we set the temperature parameter $\tau$ to $0.6$ for 1.5B model and 0.5 for 7B model, \texttt{ref-high} and \texttt{ref-low} to the lower and upper quartiles of the reference model’s PPL distribution over correct and incorrect responses, respectively. Experiments were conducted using the verl framework on 8 $\times$ A100 GPUs. Training a single CAPO/GRPO run to convergence takes approximately 24 hours for the 1.5B model and 48 hours for the 7B model. 
\label{experiment setup}
\subsection{Additional Experimental Results}
\label{additional experiment results}

\paragraph{Additional Experimental Results of Qwen2.5-Math-1.5B model and Accuracy Curves}
Figure~\ref{main_result-1.5b} demonstrates the complete experimental results of Qwen2.5-Math-1.5B for all methods over six benchmarks. Our method achieves a significant improvement in calibration over all baselines on the 1.5B model, while maintaining accuracy gains that are comparable to or even surpass those of GRPO. In contrast, other methods either yield only limited improvements in calibration or incur a degradation in accuracy. Figure~\ref{validation-acc} shows the accuracy of all methods on the validation set as a function of training steps. Our method exhibits accuracy improvements comparable to GRPO on both model scales. 

\paragraph{Ablation of Hyperparameters}
As illustrated in Figure~\ref{tau}, our method exhibits low sensitivity to the variation of the hyperparameter $\tau$ across different values ($\tau = 0.4, 0.6, 1.0$). Similarly, when adjusting the values of \textit{ref-high} and \textit{ref-low} to tighten the masking range from $[1.05, 2.5]$ to $[1.25, 2.1]$, Figure~\ref{ref}(b) demonstrates that the performance remains similarly insensitive to these hyperparameter settings.Figure~\ref{mask-ablation} shows that applying the masking mechanism to GRPO alone does not improve calibration, highlighting the importance of the calibration-aware advantage estimator.
\begin{figure*}[t]
    \centering
    \includegraphics[width=\textwidth]{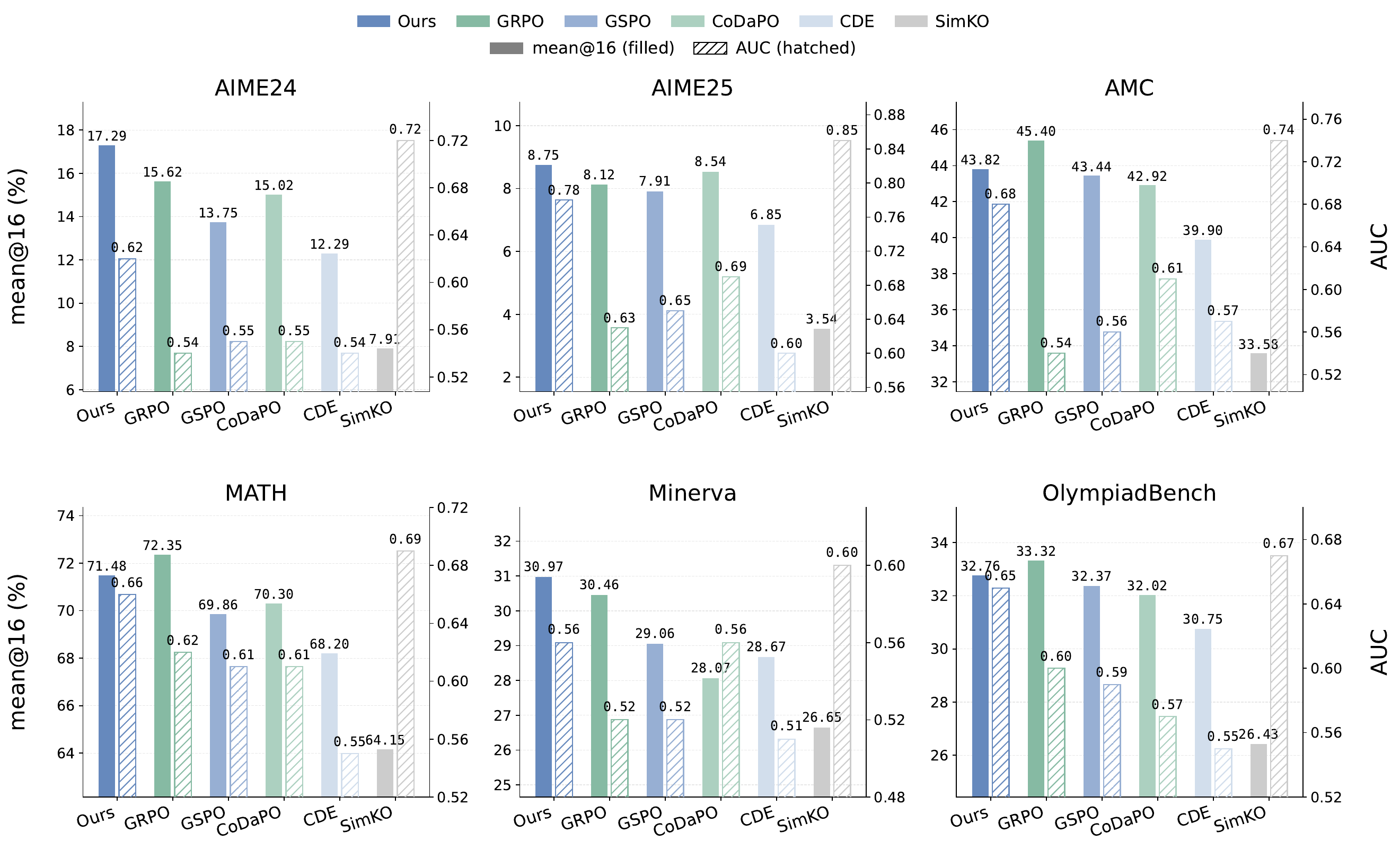}
    \caption{Results of calibration (measured by AUC-mean) and accuracy (measured by mean@16)
    for our method and all baselines on Qwen2.5-Math-1.5B across six test benchmarks.}
    \label{main_result-1.5b}
\end{figure*}
\begin{figure*}[t]
  \centering
  \begin{subfigure}[t]{0.48\linewidth}
    \centering
    \includegraphics[width=\linewidth]{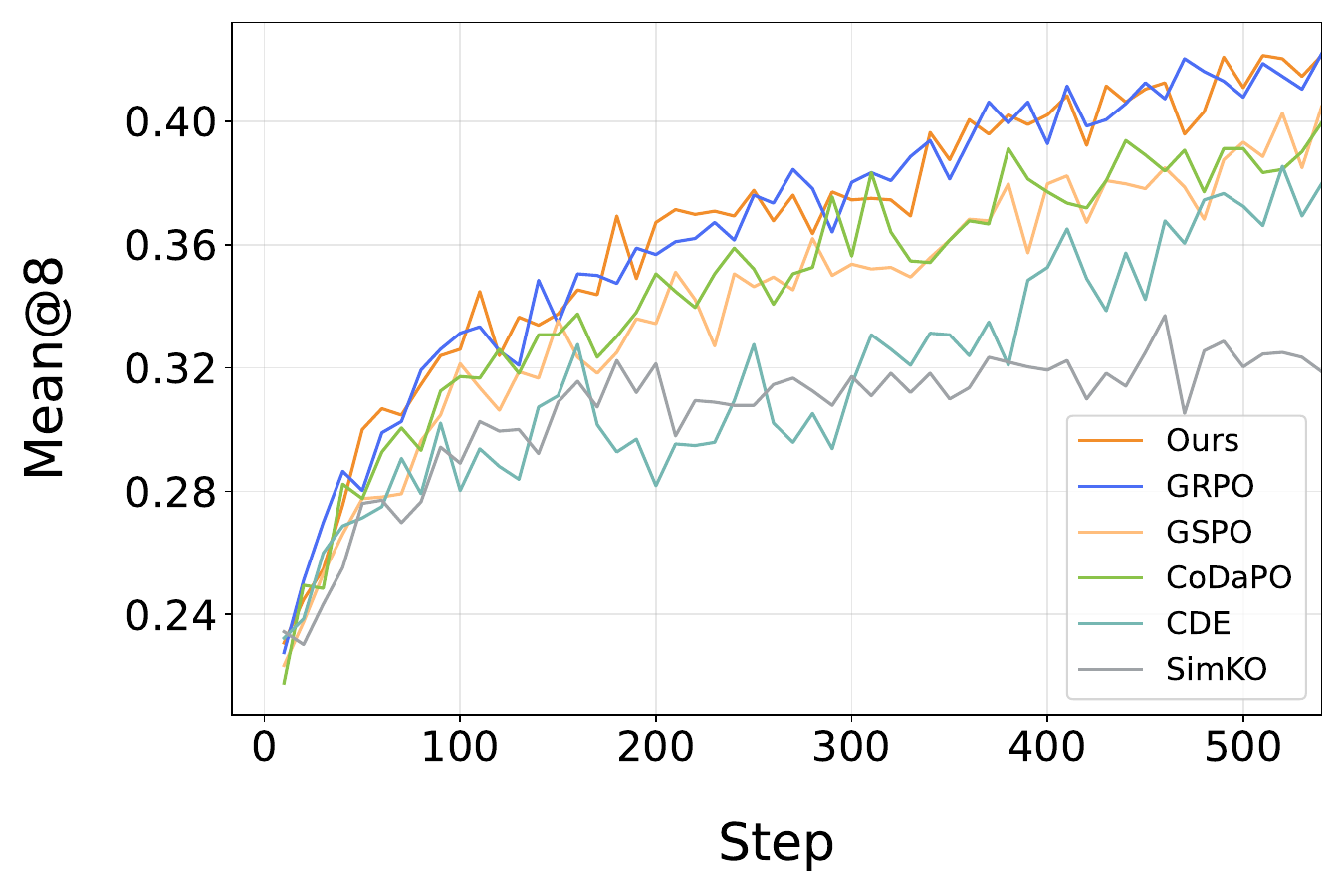}
    \caption{}
    \label{fig:tau-acc}
  \end{subfigure}
  \hfill
  \begin{subfigure}[t]{0.48\linewidth}
    \centering
    \includegraphics[width=\linewidth]{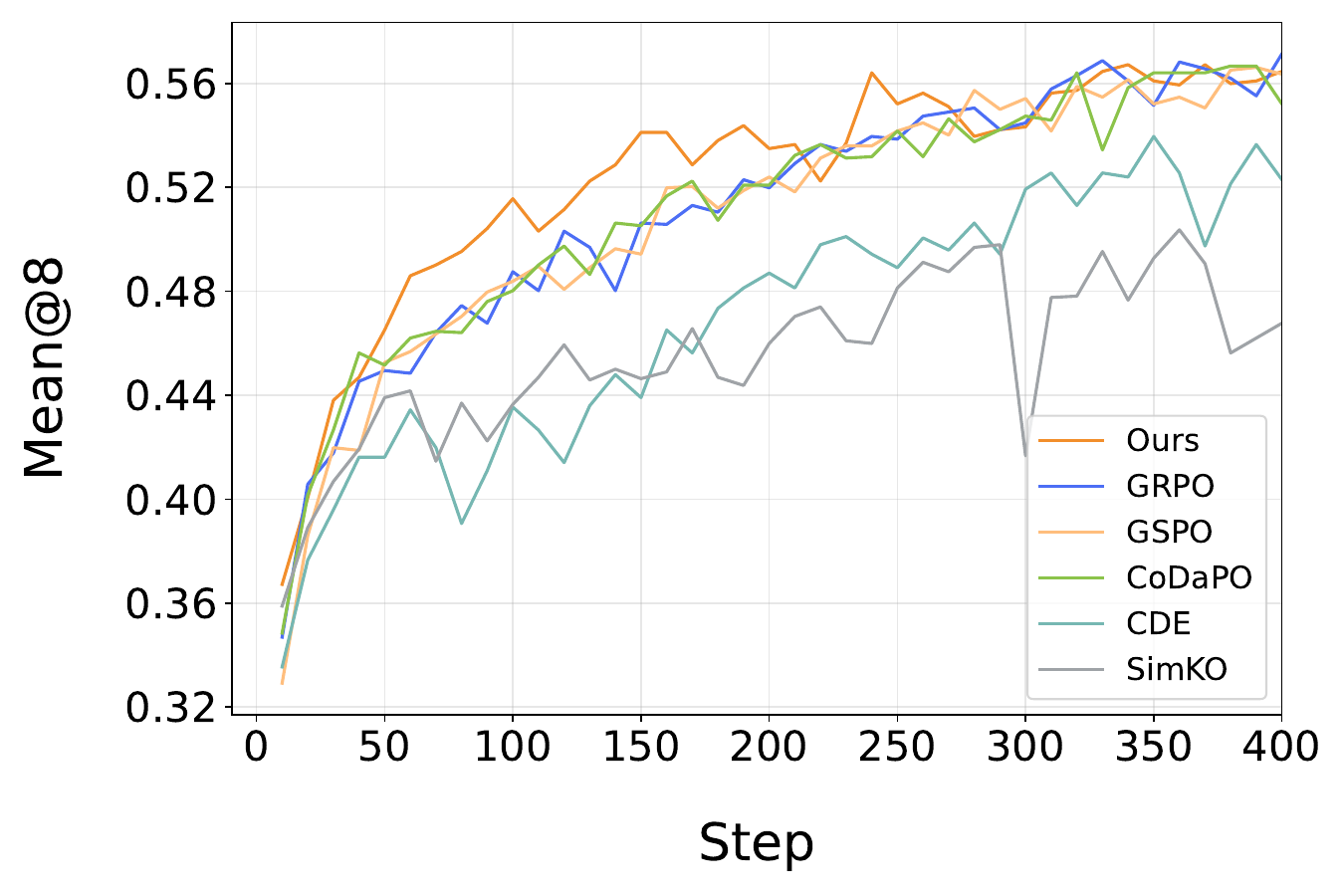}
    \caption{}
    \label{fig:tau-cal}
  \end{subfigure}
  \caption{
  Accuracy trajectories on the validation set over training steps for our method and all baselines on the Qwen2.5-Math-1.5B (left) and Qwen2.5-Math-7B models (right).
  }
  \label{validation-acc}
\end{figure*}
\begin{figure*}[t]
  \centering
  \begin{subfigure}[t]{0.48\linewidth}
    \centering
    \includegraphics[width=\linewidth]{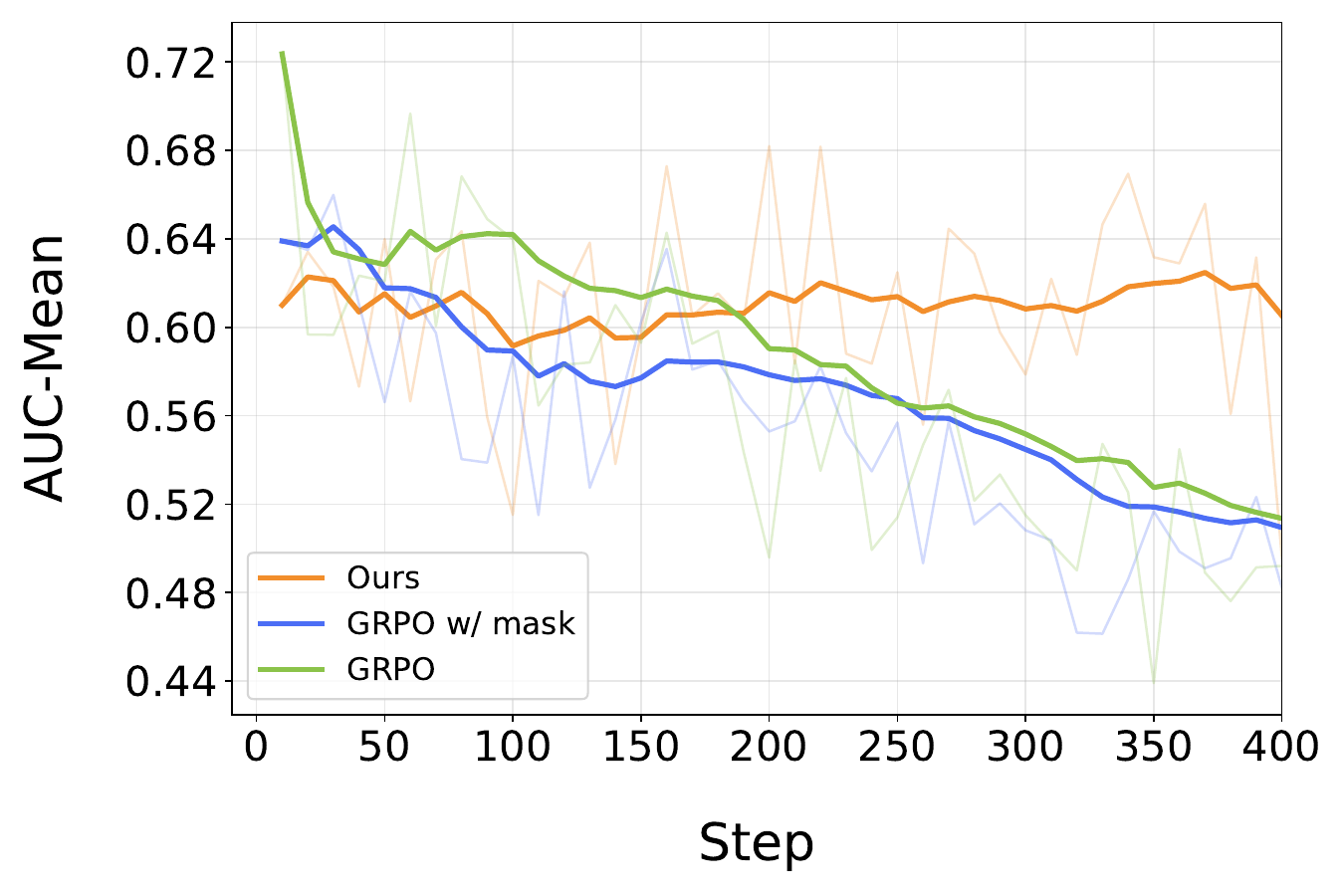}
    \caption{}
    \label{fig:tau-acc}
  \end{subfigure}
  \hfill
  \begin{subfigure}[t]{0.48\linewidth}
    \centering
    \includegraphics[width=\linewidth]{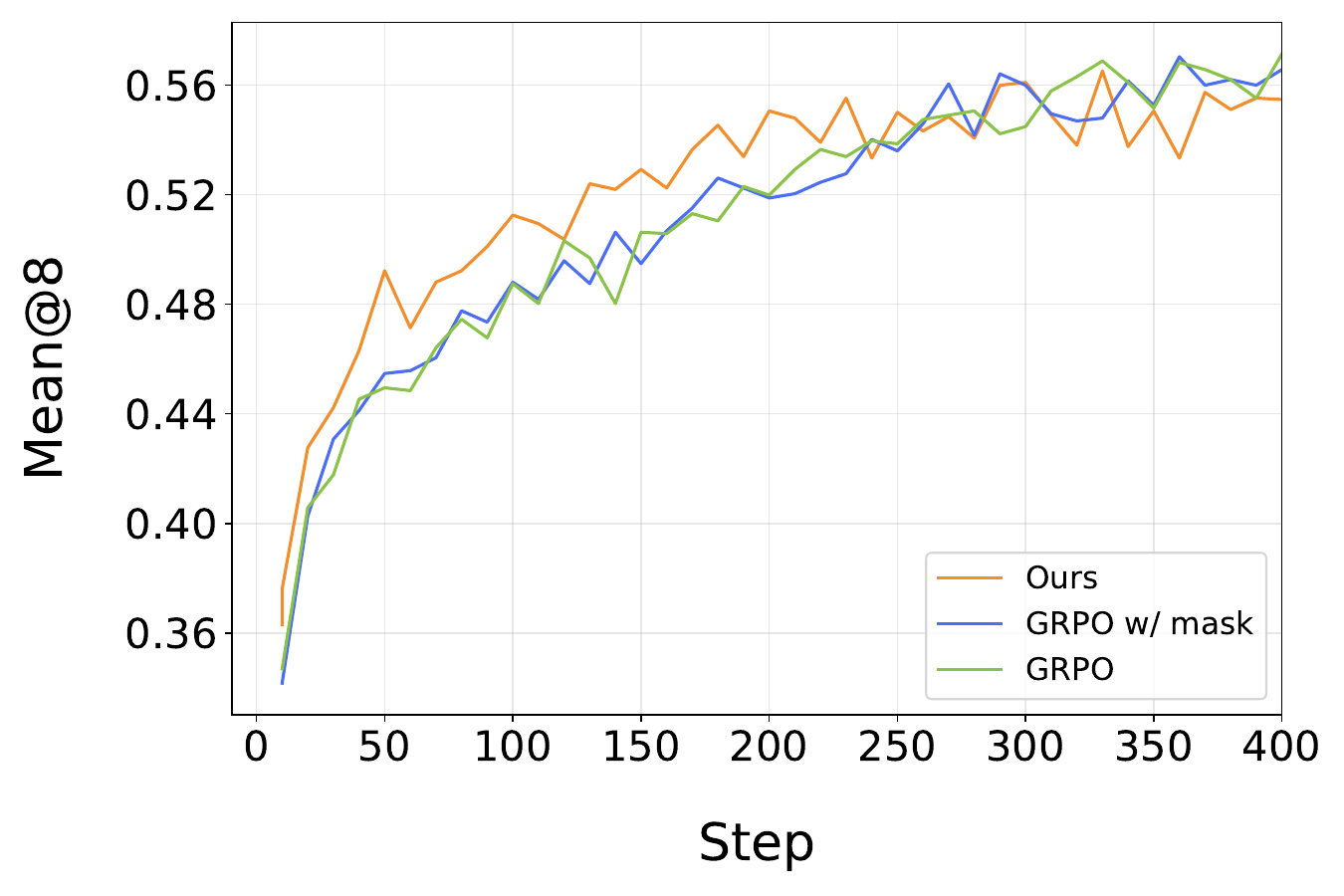}
    \caption{}
    \label{mask-auc}
  \end{subfigure}
  \caption{
  Ablation studies on the effectiveness of applying the masking mechanism alone to GRPO. The results show that the mask by itself does not improve calibration, highlighting the necessity of the calibration-aware advantage estimator.
  }
  \label{mask-ablation}
\end{figure*}

\begin{figure*}[t]
  \centering
  \begin{subfigure}[t]{0.48\linewidth}
    \centering
    \includegraphics[width=\linewidth]{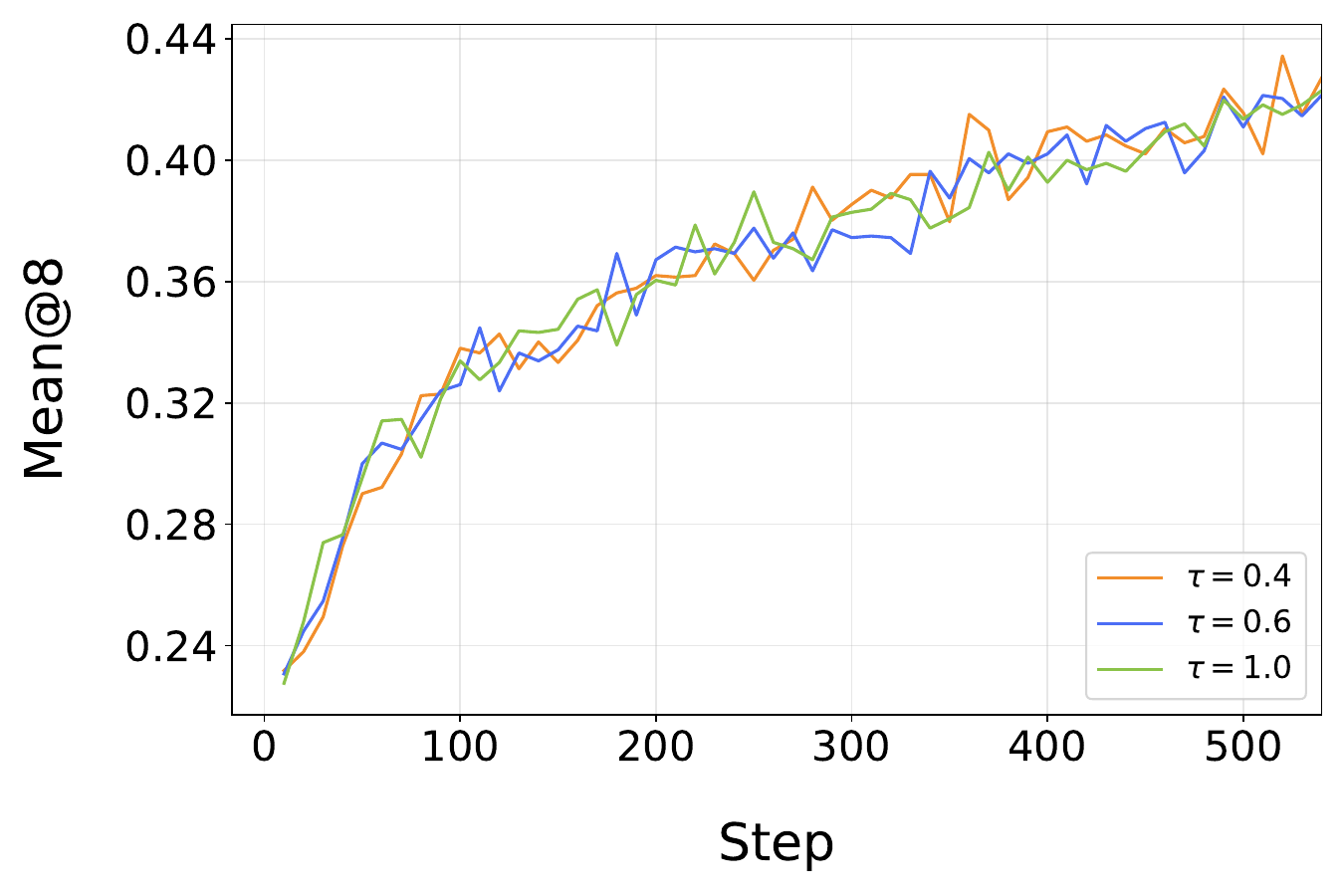}
    \caption{}
    \label{fig:tau-acc}
  \end{subfigure}
  \hfill
  \begin{subfigure}[t]{0.48\linewidth}
    \centering
    \includegraphics[width=\linewidth]{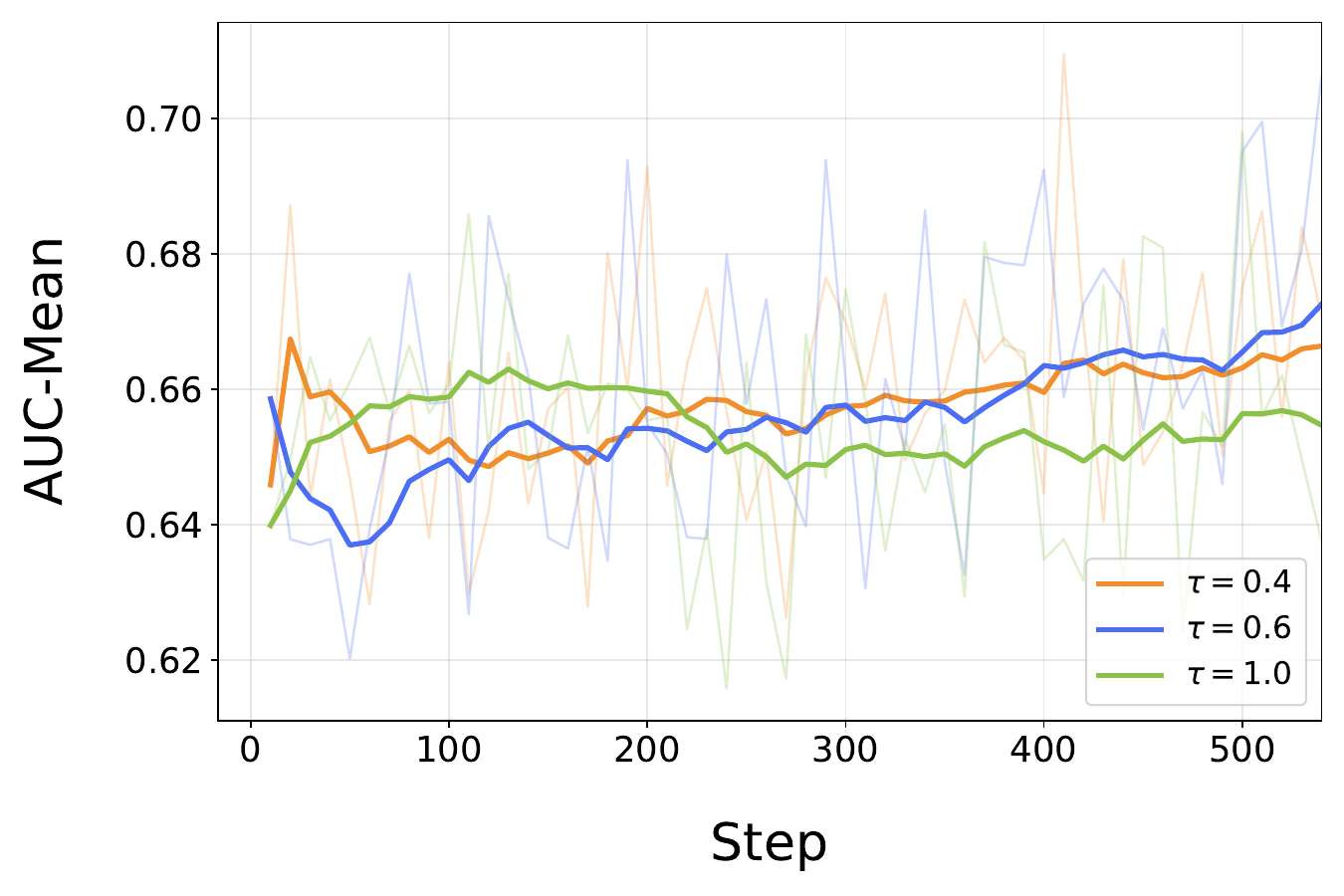}
    \caption{}
    \label{fig:tau-cal}
  \end{subfigure}

  \caption{
  Ablation studies on the sensitivity of accuracy improvement curves (a) and calibration metrics (b) to the hyperparameter $\tau$.
  }
  \label{tau}
\end{figure*}

\paragraph{Ablation of Noise Masking mechanism}
Comparing the performance of the algorithm with and without the masking mechanism, it can be observed from Figure~\ref{ref}(a) that removing the mask leads to a gradual increase in model entropy, causing the accuracy (acc) to stagnate prematurely or even decline. In contrast, with the inclusion of the masking mechanism, the model's entropy remains stable, and the accuracy exhibits a steady improvement. These observations underscore the disruptive impact of noisy data on experimental stability and optimization signals, as well as the effectiveness of the masking mechanism.
\begin{figure*}[t]
  \centering
  \begin{subfigure}[t]{0.48\linewidth}
    \centering
    \includegraphics[width=\linewidth]{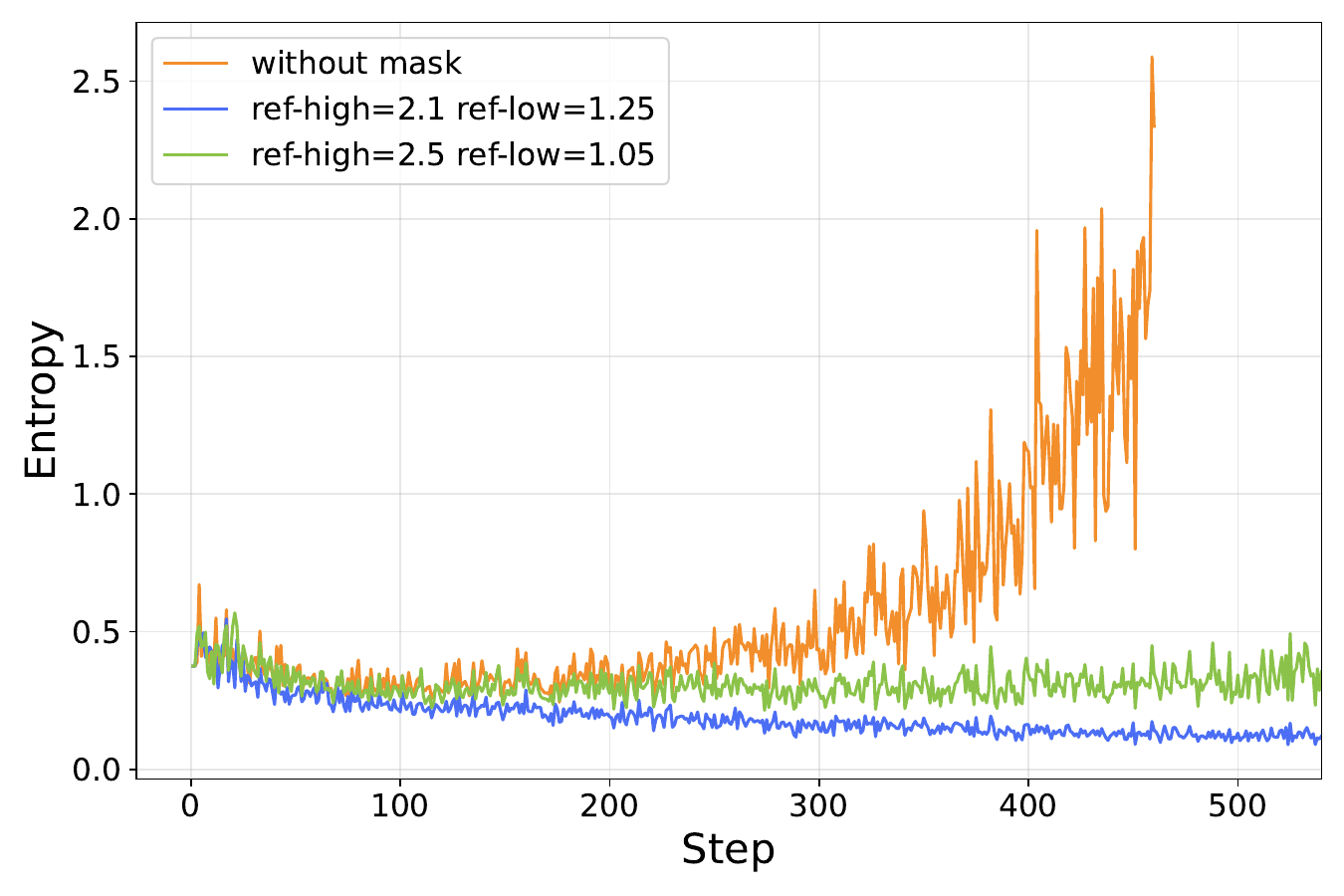}
    \caption{}
    \label{fig:ref-entropy}
  \end{subfigure}
  \hfill
  \begin{subfigure}[t]{0.48\linewidth}
    \centering
    \includegraphics[width=\linewidth]{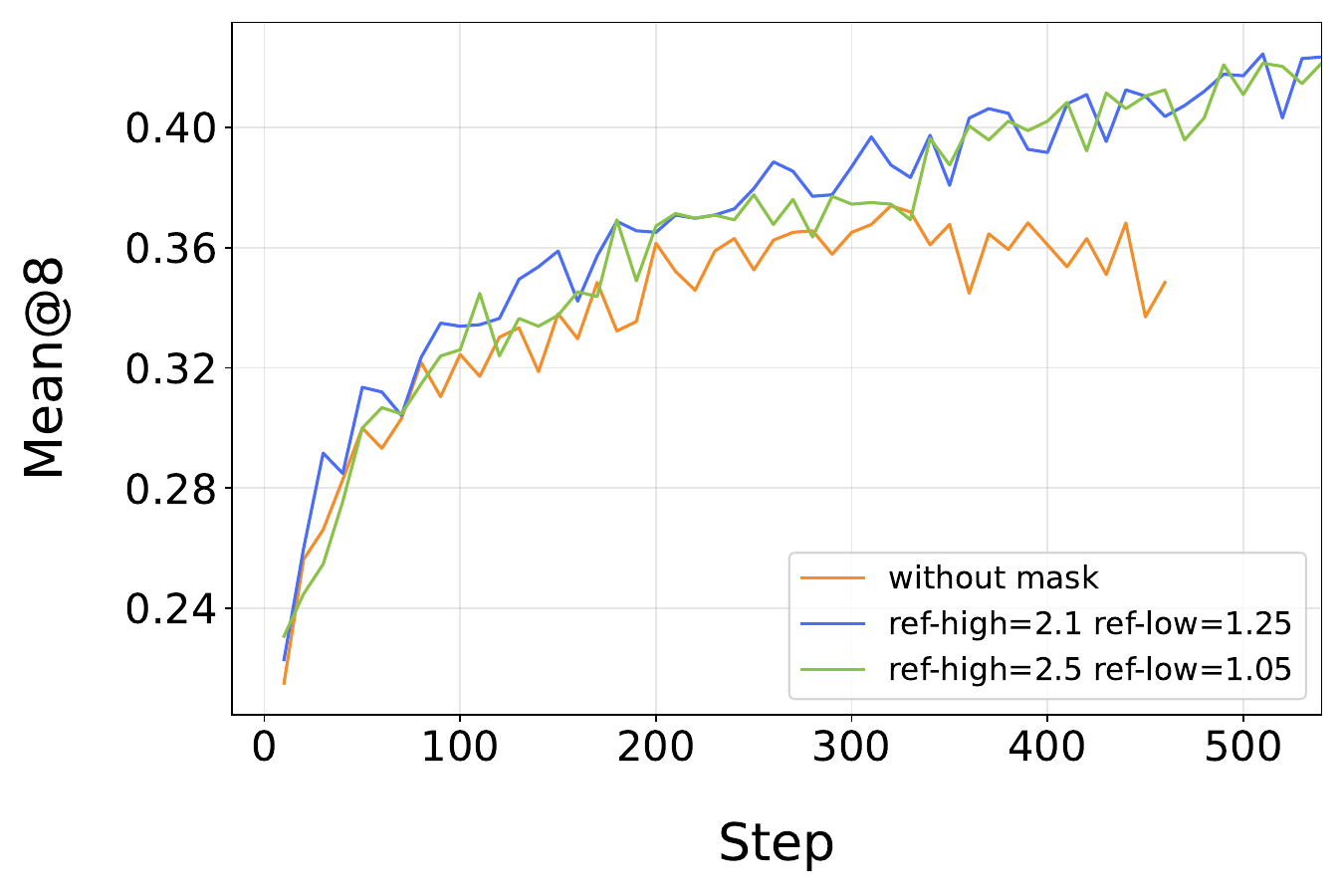}
    \caption{}
    \label{fig:ref-acc}
  \end{subfigure}

  \caption{
  Ablation studies on the impact of the noise-masking mechanism on training stability (a)
  and the sensitivity to the \texttt{ref-high} and \texttt{ref-low} hyperparameters (b).
  }
  \label{ref}
\end{figure*}

\end{document}